\def\eqref#1{equation~\ref{#1}}
\def\1{\bm{1}}
\DeclareMathAlphabet{\mathsfit}{\encodingdefault}{\sfdefault}{m}{sl}
\SetMathAlphabet{\mathsfit}{bold}{\encodingdefault}{\sfdefault}{bx}{n}
\newcommand{\E}{\mathbb{E}}
\newcommand{\R}{\mathbb{R}}
\newtheorem{theorem}{Theorem}
\newtheorem{lemma}[theorem]{Lemma}
\newtheorem{remark}[theorem]{Remark}
\def\UrlAlphabet{%
      \do\a\do\b\do\c\do\d\do\e\do\f\do\g\do\h\do\i\do\j%
      \do\k\do\l\do\m\do\n\do\o\do\p\do\q\do\r\do\s\do\t%
      \do\u\do\v\do\w\do\x\do\y\do\z\do\A\do\B\do\C\do\D%
      \do\E\do\F\do\G\do\H\do\I\do\J\do\K\do\L\do\M\do\N%
      \do\O\do\P\do\Q\do\R\do\S\do\T\do\U\do\V\do\W\do\X%
      \do\Y\do\Z}
\def\UrlDigits{\do\1\do\2\do\3\do\4\do\5\do\6\do\7\do\8\do\9\do\0}
\g@addto@macro{\UrlBreaks}{\UrlOrds}
\g@addto@macro{\UrlBreaks}{\UrlAlphabet}
\g@addto@macro{\UrlBreaks}{\UrlDigits}
\DeclareMathAlphabet\mathbfcal{OMS}{cmsy}{b}{n}
\newcommand{\ten}[1]{\mathbfcal{#1}}
\newcommand{\mat}[1]{\mathbf{#1}}
\renewcommand*\env@matrix[1][*\c@MaxMatrixCols c]{%
  \hskip -\arraycolsep
  \let\@ifnextchar\new@ifnextchar
  \array{#1}}
\title{PID Control-Based Self-Healing to Improve the Robustness of Large Language Models}
\author{\name Zhuotong Chen \email ztchen@ucsb.edu \\
      \addr Department of Electrical and Computer Engineering, University of California, Santa Barbara
      \AND
      \name Zihu Wang \email zihu\_wang@ucsb.edu  \\
      \addr Department of Electrical and Computer Engineering, University of California, Santa Barbara
      \AND
      \name Yifan Yang \email yifanyang@ucsb.edu  \\
      \addr Department of Computer Sciences, University of California, Santa Barbara
      \AND
      \name Qianxiao Li \email qianxiao@nus.edu.sg \\
      \addr Department of Mathematics, National University of Singapore, Singapore\\
      Institute of High Performance Computing, A*STAR, Singapore
      \AND
      \name Zheng Zhang \email zhengzhang@ece.ucsb.edu\\
      \addr Department of Electrical and Computer Engineering, University of California, Santa Barbara}
\begin{document}

\maketitle

\begin{abstract}
Despite the effectiveness of deep neural networks in numerous natural language processing applications, 
recent findings have exposed the vulnerability of these language models when minor perturbations are introduced.
While appearing semantically indistinguishable to humans, these perturbations can significantly reduce the performance of well-trained language models, raising concerns about the reliability of deploying them in safe-critical situations.
In this work,
we construct a computationally efficient self-healing process to correct undesired model behavior during online inference when perturbations are applied to input data.
This is formulated as a trajectory optimization problem in which the internal states of the neural network layers are automatically corrected using a PID (Proportional-Integral-Derivative) control mechanism.
The P controller targets immediate state adjustments, while the I and D controllers consider past states and future dynamical trends, respectively. 
We leverage the geometrical properties of the training data to design effective linear PID controllers. 
This approach reduces the computational cost to that of using just the P controller,
instead of the full PID control.
Further,
we introduce an analytical method for approximating the optimal control solutions,
enhancing the real-time inference capabilities of this controlled system.
Moreover, we conduct a theoretical error analysis of the analytic solution in a simplified setting. 
The proposed PID control-based self-healing is a low-cost framework that improves the robustness of pre-trained large language models,
whether standard or robustly trained, against a wide range of perturbations.
A detailed implementation can be found in:\url{https://github.com/zhuotongchen/PID-Control-Based-Self-Healing-to-Improve-the-Robustness-of-Large-Language-Models}.
\end{abstract}

\section{Introduction}
\label{section: introduction}
The growth of data and advancements in computing power have heralded a new era in the field of natural language processing (NLP), significantly shaped by the advent of deep neural networks. 
One of the most influential innovations in this domain is the transformer architecture \citep{vaswani2017attention}, which is the fundamental block of many successful large language models (LLMs) \citep{brown2020language}.
This architecture has become the state-of-the-art in various NLP tasks, including sentiment analysis \citep{vinodhini2012sentiment},
text summarization \citep{nenkova2012survey},
and speech recognition \citep{hannun2014deep}, among others.

However, many deep neural networks are vulnerable to malicious perturbations \citep{morris2020textattack}.
While appearing semantically indistinguishable to humans,
these perturbations can significantly degrade the performance of pre-trained LLMs.
This vulnerability raises concerns about the reliability of deploying them in safety-critical situations, such as in clinical decision support systems, where LLMs serve a critical role in assisting healthcare professionals with patient care insights \citep{huang2019clinicalbert}.
In response to this challenge, there has been significant progress in developing algorithms to enhance model robustness against such perturbations \citep{yoo2021towards, zhu2019freelb, wicker2021bayesian}.
Predominantly, these methods are rooted in the foundation of adversarial training \citep{madry2018towards}, a method where pre-trained LLMs are fine-tuned (or trained from random initialization) to overcome the effects of specific adversarial perturbations.
This is achieved by adjusting either the entire set of model parameters or a significant portion thereof \citep{hu2021lora}.
Despite its effectiveness,
this approach raises three critical concerns.
Firstly,
adjusting model parameters using adversarial examples requires substantial computational resources \citep{zhang2019you}.
Due to the discrete input space of NLP tasks,
searching for an adversarial example generally involves solving a combinatorial optimization problem \citep{bernhard2008combinatorial},
which suffers from an exponential growth in the number of feasible solutions as the size of the problem increases.
Secondly,
there exists a potential trade-off where improved adversarial robustness may lead to compromised performance on standard, natural datasets \citep{he2021analyzing}.
Thirdly, and more problematically, adversarial training is less effective against unforeseen adversarial perturbations \citep{tramer2019adversarial}. 
This limitation becomes particularly noticeable when deploying LLMs in practice, 
where anticipating the potential adversarial attacks in advance is nearly impossible.

In this paper,
we investigate the concept of a self-healing process as a cost-effective method to improve the robustness of pre-trained LLMs against a range of perturbations.
The most well-known self-healing mechanism is probably the human immune system: B cells and T cells can work together to identify and kill many external attackers (e.g., bacteria) to maintain the health of the human body \citep{rajapakse2011emerging}.
In the context of machine learning,
self-healing refers to the ability of a model to automatically identify and correct issues that may arise during its operation \citep{wang2021rails,chen2022self}.
To achieve this,
we consider a pre-trained LLM (typically a composition of transformation blocks) as a discretization of the continuous dynamical system \citep{weinan2017proposal},
this allows us to formulate the robustness issue of LLMs as a trajectory optimization problem \citep{hehn2015real}.
Our approach involves designing PID (Proportional-Integral-Derivative) controllers at hidden layers of a pre-trained LLM.
A PID controller continuously calculates an error value as the difference between a desired reference and a measured process variable and applies a correction control signal based on proportional, integral, and derivative terms.
More specifically,
let the error value be the difference between a desired reference and the current state.
If the error is large, the output of the P controller will be proportionately large,
thereby making a significant adjustment and helping the controller respond quickly to errors.
The I controller determines the present control output based on the integration of past errors, 
which ensures that even small errors are corrected over time.
The D controller generates control signals based on the derivative of the error dynamics.
The combination of P, I, and D controllers quantifies undesired model behavior from present errors, past accumulated errors, and future error trends,
and generates control signals to correct the errors.
Figure \ref{figure: network_structure} illustrates the proposed PID control-based self-healing framework.
Given a $T$-layer LLM,
time-dependent PID controllers (represented as $P_t$, $I_t$, and $D_t$) generate a feedback control based on the state $\mat{x}_t$ (to simplify the demonstration, only $\mat{x}_t$ is considered as the input for both I and D controllers).
These feedback controls aim to remove the undesirable effects caused by input perturbations.

\begin{figure}
\centering
\includegraphics[width=\linewidth]{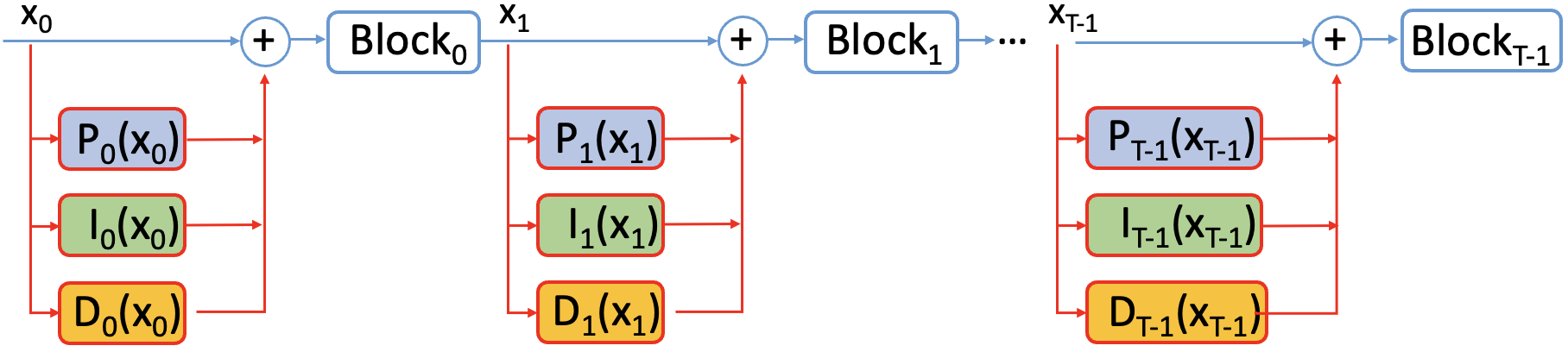}
\caption{The structures of feed-forward deep neural network (highlighted in blue) and the proposed PID control method (highlighted in red).}
\label{figure: network_structure}
\end{figure}

The methodology of constructing a self-healing process to improve the robustness of deep neural networks was initially introduced in \citet{chen2020towards} and its subsequent work \citet{chen2022self}. 
It leveraged a closed-loop control method to detect and correct potential errors applied to input data.
This method belongs to a special case of P control in the proposed PID control framework, in which only the errors from the present states are corrected.
However, the effectiveness of using only proportional controllers is limited, as it merely addresses the error at each time step, neglecting the overall error dynamics (we provide numerical evidence to show this in Section \ref{section: ablation study}).
Additionally, a major limitation of the closed-loop control method adapted in \citet{chen2020towards, chen2022self} is its computational inefficiency. 
The optimal control solution requires simulating the Hamiltonian dynamics over several iterations during online inference \citep{pontryagin1987mathematical},
which involves both forward and backward propagation of a deep neural network \citep{chen2020towards} (we discuss the details about simulating the Hamiltonian dynamics in Section \ref{section: pid control via embedding manifolds}).
This inefficiency renders the self-healing framework impractical for deployment in large-scale LLMs, which may contain millions or even billions of parameters \citep{kenton2019bert, liu2019roberta, brown2020language}. 
To address the challenges associated with the previously mentioned adversarial training and existing closed-control method, this study presents three contributions:

\begin{itemize}
    \item 
    We introduce a novel PID control framework to realize the self-healing capability to improve the robustness of LLMs during online inference.
    The proposed framework generalizes the conventional robustness improvement methods that predominantly focus on proportional errors.
    We demonstrate that employing all P, I, and D controllers can be as computationally efficient as single control schemes, achieved through special controller design.
    \item 
    We approximate the layer-wise transformations in the pre-trained LLM as linear orthogonal transformations and derive an analytical solution for generating PID control solutions. 
    This analytical method yields a closed-form expression for the optimal solution, enhancing the speed of online inference. 
    This acceleration is especially beneficial when integrating the self-healing framework into LLMs. 
    While these approximations might not align with practical scenarios, our method exhibits superior robustness improvement in a variety of numerical experiments.
    \item 
    We derive a comprehensive error analysis of the controlled system, highlighting the robustness improvement of LLMs through PID control solutions. 
    This analysis provides insight into the effectiveness of PID control in improving the robustness of LLMs in simplified settings, thereby contributing to the understanding of controlled systems.
\end{itemize}

\subsection{Background on PID Control}
\label{sec: background on pid control}
Here we provide background knowledge on PID control, which is an essential building block of our proposed framework.
A PID (Proportional, Integral, Derivative) controller is a feedback-based control system extensively utilized in industrial settings and numerous other domains where continuous adjustment is essential.
Specifically,
given a continuous dynamic system, typically described as an ordinary differential equation,
\begin{equation*}
\dot{\mat{x}}_t
=
\Psi(\mat{x}_t, \mat{u}_t),
\;\;
\mat{x}_0
\in
\mathbb{R}^d,
\end{equation*}
where $\dot{\mat{x}}_t$ represents the derivative with respect to time, while $\Psi: \mathbb{R}^d \times \mathbb{R}^m \rightarrow \mathbb{R}^d$ defines a function or vector field. 
The term $\mat{x}_0$ specifies the initial state. 
Moreover, $\mat{u}_t \in \mathbb{R}^m$ denotes the control input exerted on the dynamic system (e.g. in this work, the control input is applied linearly to the state $\mat{x}_t$).
The PID control continuously computes an error term, $e_t$, by calculating the difference between a target reference $\mat{r}_t$ and the actual state variable $\mat{x}_t$,
\begin{equation*}
e_t
=
\lVert \mat{r}_t - \mat{x}_t \rVert.
\end{equation*}
The construction for the target reference generally depends on the application.
In this work, the target reference is constructed by a sequence of embedding manifolds (see Section \ref{section: pid control via embedding manifolds}),
and the process variable represents the hidden state of a deep neural network during forward propagation.

Then, a PID controller applies a control $\mat{u}_t$ based on proportional, integral, and derivative terms to correct the measured error.
In the continuous case,
\begin{equation*}
\mat{u}_t
=
K_p e_t
+
K_i \int_0^t e(\tau) d \tau
+
K_d \frac{d e_t} {d t},
\end{equation*}
where $K_p$, $K_i$, and $K_d$, all non-negative, denote the coefficients for the proportional, integral, and derivative terms respectively.
In the PID control design,
\begin{itemize}
    \item 
    The output from proportional control directly correlates with the current value of the error, $e_t$. 
    Thus, a larger error will yield a proportionally larger control output, adjusted by the gain factor $K_p$. 
    However, employing only proportional control leads to a persistent difference between the desired target and the actual process variable, since the generation of the proportional response necessitates the presence of an error.
    \item 
    The output from integral control considers the accumulation of past error values over time. 
    This means that when a residual error remains after proportional control is applied, the integral control works to correct this residual error by leveraging the historical total error. 
    This will result in the proportional effect diminishing as the error decreases, but this is compensated for by the growing integral effect.
    \item
    The output from derivative control provides an estimate of the trend of the error, using its current rate of change as a basis. 
    It effectively seeks to reduce the effect of the error by exerting a control influence generated by the rate of error change.
    Hence, the more rapid the error's progression, the more intense the applied correction.
\end{itemize}

Although a PID controller has three control terms, some applications need only one or two terms to provide appropriate control. 
This is achieved by setting the unused parameters to zero and is called a PI, PD, P, or I controller in the absence of the other control actions. 
PD controllers are fairly common in applications where integral action would be sensitive to measurement noise, but the derivative term is often needed for the system to reach its target reference.

\section{The PID Control-Based Self-Healing Framework for Large Language Models}
\label{section: The PID Control Framework for Neural Networks}
In this work, 
we use the concept of "self-healing" to describe the capability of an LLM to automatically correct errors that may arise.
This idea has been studied in the domain of integrated circuits, primarily addressing errors attributable to variations in nano-scale fabrication processes.
More specifically,
the internal dynamics of an electronic circuit network can be understood through the lens of ordinary differential equations \citep{ho1975modified}. 
This approach conceptualizes the circuit network in terms of state variables that track nodal voltages and branch currents as they change over time. 
Moreover, it's possible to create a self-healing system within the circuit network, enabling it to continuously monitor and optimize its performance throughout the lifetime of operation \citep{tang2012low, lee2012self}. 
There is a growing body of research, including works by \citep{weinan2017proposal,haber2017stable, JMLR:v18:17-653}, and others, demonstrating the connection between dynamical systems and deep neural networks. 
In our study, we interpret a pre-trained $T$-layer LLM as a form of discrete dynamical system,
\begin{equation}
\label{eq: dynamical system}
\mat{x}_{t+1}
=
F_t(\mat{x}_t + \pi_t(\mat{x}_t), \boldsymbol\theta_t),
\;\;
\forall
t = 0, 1, ..., T-1,
\end{equation}
where $F_t(\cdot, \boldsymbol\theta_t): \mathbb{R}^d \rightarrow \mathbb{R}^d$ represents a transformer block parametrized by $\boldsymbol\theta_t$,
$\pi_t: \mathbb{R}^d \rightarrow \mathbb{R}^d$ is a feedback controller that maps the current state $\mat{x}_t$ to a control action.
We aim to construct feedback controllers $\overline{\pi}:= \{ \pi_t \}_{t=0}^{T-1}$ to ensure that the controlled states ($\mat{x}_t + \pi_t(\mat{x}_t)$) yield the desired output when perturbations are applied to input data.
This can be formulated as a trajectory optimization problem,
\begin{equation}
\min \limits_{\overline{\pi}} \mathbb{E}_{(\mat{x}_0, y) \sim \mathcal{D}} \left[ J(\mat{x}_0, \mat{y}, \overline{\pi}) \right] 
\vcentcolon = 
\min \limits_{\overline{\pi}} \mathbb{E}_{(\mat{x}_0, y) \sim \mathcal{D}} \left[ \Phi(\mat{x}_T, y) + \sum_{t=0}^{T-1} {\cal L}(\{ \mat{x}_s \}_{s=0}^t, \pi_t, f_t) \right],
\;\;
{\rm s.t.}
\;\;
\eqref{eq: dynamical system}
\label{eq: objective function}
\end{equation}
where initial states and labels \((\mat{x}_0, y)\) are sampled from the underlying data distribution $\mathcal{D}$. 
The terminal loss $\Phi(\mat{x}_T, y)$ evaluates the discrepancy between the terminal state and a pre-defined destination set. 
In machine learning applications, this measures the consistency between the terminal state $\mat{x}_T$ (or its transformation) with the true label (e.g., cross-entropy loss).
However, this is not feasible in general machine learning applications as the true label $y$ remains unknown during inference.
More specifically,
during online inference,
given an initial condition $\mat{x}_0$,
its label $y$ cannot be accessed to optimize the states since this quantity is unknown.
Consequently, the terminal loss is negated by setting it to zero. 
In these cases, 
the optimal controllers $\{ \pi_t \}_{t=0}^{T-1}$ minimize the cumulative running losses ${\cal L}(\{ \mat{x}_s \}_{s=0}^t, \pi_t, f_t)$, which assess the state trajectory and control using certain measurement function $f_t$. 

In Section~\ref{section: pid control via embedding manifolds},
we construct the running loss, which forms a crucial part of the objective function defined in \eqref{eq: objective function}.
This objective function needs to solve a complex optimization problem for each initial state during online inference,
which presents a challenge to computational efficiency.
To address this, the subsequent Section~\ref{section: An Analytic Solution for Fast Inference} presents a more efficient algorithm for solving the objective function under specific assumptions.
Furthermore, Section~\ref{section: Theoretical Error Analysis} presents a comprehensive theoretical error analysis for the proposed algorithm.
Lastly,
Section~\ref{section: additional details for constructing pid control} provides additional details on the implementation of constructing PID controls. 

\subsection{PID Control Design via Embedding Manifolds}
\label{section: pid control via embedding manifolds}
In analyzing an LLM through the lens of discrete dynamical systems, we observe that its state trajectory, governed by the composition of transformations, 
forms a lower-dimensional structure embedded in the ambient state space, also known as the ``manifold hypothesis" \citep{fefferman2016testing} (empirical evidence is shown in Table \ref{table: ranks and embedding error}).
This can be conceptualized as a sequence of embedding manifolds.
We consider a r-dimensional smooth manifold embedded in $\mathbb{R}^d$ as $\{ \mat{x}: f(\mat{x}) = \mat{0} \}$, 
where $f: \mathbb{R}^d \rightarrow \mathbb{R}^{(d-r)}$ is a surjective mapping that can be used to measure the distance between a state $\mat{x}_t$ to the embedding manifold.
For instance,
$\lVert f(\mat{x}) \rVert = 0$ if $\mat{x}$ belongs to the embedding manifold,
and $\lVert f(\mat{x}) \rVert > 0$ if $\mat{x}$ is outside the embedding manifold.

At each time step $t$ (e.g., the ${\rm t^{th}}$ hidden states),
we construct three embedding manifolds with three distinct surjective functions $f_t^P: \mathbb{R}^d \rightarrow \mathbb{R}^{(d - r)}$,
$f_t^I: \mathbb{R}^d \rightarrow \mathbb{R}^{(d - r)}$,
and $f_t^D: \mathbb{R}^d \rightarrow \mathbb{R}^{(d - r)}$,
that represent the embedding manifolds of the state,
the integration of past states,
and the derivative of the state,
respectively.
In a discrete setting, integration corresponds to the accumulation of past states, while the derivative is approximated by the difference between two successive states. 
Under this setting, $f_t^I$ denotes the embedded manifold of past states, and $f_t^D$ represents the embedded manifold derived from the difference between two consecutive states.
Given these embedding functions,
we propose the following running loss to evaluate the controlled state at time step $t$,
\begin{align}
    {\cal L}(\{ \mat{x}_s \}_{s=0}^t, \pi_t, (f_t^P, f_t^I, f_t^D)) 
    & \vcentcolon = 
    \frac{1} {2} \lVert 
    f_t^P(\mat{x}_t + \pi_t(\mat{x}_t)) \lVert_2^2
    + \frac{1} {2} \lVert f_t^I(\mat{x}_t + \pi_t(\mat{x}_t) + \sum_{s=0}^{t-1} \mat{x}_s) \lVert_2^2
    \nonumber
    \\
    & \;\;\;\; + \frac{1} {2} \lVert f_t^D(\mat{x}_t + \pi_t(\mat{x}_t) - \mat{x}_{t-1}) \rVert_2^2 + \frac{c_t} {2} \lVert \pi_t(\mat{x}_t) \rVert_2^2,
    \label{eq: running loss}
\end{align}
where the layer-dependent regularization term $c_t$ prevents using large controls.
The running loss consists of three components, each assessing the error in the controlled state through distinct embedding functions: proportional, integration, and derivative. 
In this construction of running loss, the optimal controller results in a controlled state $(\mat{x}_t + \pi_t(\mat{x}_t))$ which is expected to closely align with the state embedding manifold, evaluated by $f_t^P$. 
Additionally, the controller must ensure that past controlled states remain close to the embedding manifold of integrated states, as evaluated by $f_t^I$, and the state's derivative should similarly align closely with the manifold of the state's derivative embedding, as quantified by $f_t^D$.

The objective function defined in \eqref{eq: objective function} and the associated running loss detailed in \eqref{eq: running loss} can be solved via the dynamical programming principle \citep{bellman1952theory}. 
However, this method faces exponential complexity in terms of the dimension of the state. 
To overcome this ``curse of dimensionality", we can reinterpret the optimal control problem through Pontryagin's Maximum Principle and approximate it using the method of successive approximation \citep{chen2020towards}.
To begin with, we define the Hamiltonian $H(t, \{\mat{x}_s\}_{s=0}^t, \mat{p}_{t+1}, \boldsymbol\theta_t, \mat{u}_t)$ as
\begin{equation*}
    H(t, \{\mat{x}_s\}_{s=0}^t, \mat{p}_{t+1}, \boldsymbol\theta_t, \mat{u}_t) \vcentcolon = \mat{p}_{t+1}^{T} \cdot F_t(\mat{x}_t + \mat{u}_t, \boldsymbol\theta_t) - {\cal L}(\{\mat{x}_s\}_{s=0}^t, \mat{u}_t, (f_t^P, f_t^I, f_t^D)),
    \label{eq:hamiltonian}
\end{equation*}
where $\mat{u}_t = \pi_t (\mat{x}_t)$ is a control solution.
Pontryagin's maximum principle consists of a two-point boundary value problem,
\begin{align}
    & \mat{x}_{t+1}^{\ast} = \nabla_p H(t, \{\mat{x}_s\}_{s=0}^t, \mat{p}_{t+1}, \boldsymbol\theta_t, \mat{u}_t), &(\mat{x}_0, y) \sim \mathcal{D},
    \label{eq:pmp forward}\\
    & \mat{p}_t^{\ast} = \nabla_x H(t, \{\mat{x}_s\}_{s=0}^t, \mat{p}_{t+1}, \boldsymbol\theta_t, \mat{u}_t), &\mat{p}_T = \mat{0},
    \label{eq:pmp backward}
\end{align}
plus a maximization condition of the Hamiltonian.
\begin{equation}
    H(t, \{\mat{x}_s\}_{s=0}^t, \mat{p}_{t+1}, \boldsymbol\theta_t, \mat{u}_t^{\ast}) \geq H(t, \{\mat{x}_s\}_{s=0}^t, \mat{p}_{t+1}, \boldsymbol\theta_t, \mat{u}_t), \; \forall \; \mat{u}_t \; \text{ and } \; \forall t \in \mathcal{T}.
    \label{eq:pmp maximization}
\end{equation}
To obtain a numerical solution, one can consider iterating through the forward dynamic \eqref{eq:pmp forward} to obtain all states $\{ \mat{x}_t \}_{t=0}^{T-1}$,
the backward dynamic \eqref{eq:pmp backward} to compute the adjoint states $\{ \mat{p}_t \}_{t=0}^{T-1}$,
and maximizing the Hamiltonian defined in \eqref{eq:pmp maximization} with current states and adjoint states via gradient ascent.
This iterative process is continued until convergence.
Given an initial condition $\mat{x}_0$,
Pontryagin’s Maximum Principle characterizes the optimal feedback control $\pi_t(\mat{x}_t)$ with an open-loop control $\mat{u}_t$,
this open-loop control necessitates both
forward and backward propagation through a pre-trained deep neural network over several iterations during online inference.

However, implementing the above Pontryagin’s Maximum Principle is generally infeasible for LLMs.
In the subsequent section, we construct an analytic solution with certain relaxation assumptions.

\subsection{An Analytic Solution for Fast Inference}
\label{section: An Analytic Solution for Fast Inference}
In this section, we develop an analytic solution for solving the objective function defined in \eqref{eq: objective function}, under certain assumptions.
These assumptions are summarized in the following,
\begin{itemize}
    \item 
    \textbf{Assumption $1$:}
    Both embedding manifold and layer-wise transformation are simplified as linear functions.
    In this case,
    the layer-wise transformation,
    denoted as $F_t(\cdot)$, is linearized through a matrix $\boldsymbol\theta_t \in \mathbb{R}^{d \times d}$.
    A smooth embedding manifold is represented by a linear embedding subspace.
    This linear embedding subspace is defined by a set of basis vectors, which are captured by the column space of a matrix $\mat{V} \in \mathbb{R}^{d \times r}$, corresponding to an $r$-dimensional embedding subspace.
    \item 
    \textbf{Assumption $2$:}
    Both embedding manifold and layer-wise transformation are orthogonal.
    In this case,
    layer-wise transformations are represented by orthogonal matrices, satisfying $\boldsymbol\theta_t^\top \boldsymbol\theta_t = \boldsymbol\theta_t \boldsymbol\theta_t^\top = \mat{I}$. 
    Additionally, the basis vectors $\mat{V}_t^P$, $\mat{V}_t^I$, and $\mat{V}_t^D$ are considered to be mutually orthogonal,
    \begin{equation*}
    (\mat{V}_t^P)^\top \mat{V}_t^I = \mat{0},
    \;\;\;\;
    (\mat{V}_t^P)^\top \mat{V}_t^D = \mat{0},
    \;\;\;\;
    (\mat{V}_t^I)^\top \mat{V}_t^D = \mat{0}.
    \end{equation*}
\end{itemize}
Based on these assumptions, the computational costs of the proposed control algorithm are similar to performing forward propagation with the original model. 
This implies that the PID control approach introduces negligible computational cost.
The negative impact of these assumptions are discussed in Section \ref{section: ablation study}.

\paragraph{An analytic solution under Assumption $1$.}
In the special linear case,
for the linear embedding subspaces linked to the state, state integration, and state derivative, we define the basis as $\mat{V}_t^P$, $\mat{V}_t^I$, and $\mat{V}_t^D$, respectively. 
Consequently, the embedding manifolds, represented by the surjective functions $f_t^P$, $f_t^I$, and $f_t^D$, are orthogonal projections $\mat{Q}_t^P$, $\mat{Q}_t^I$, and $\mat{Q}_t^D$,
where 
\begin{equation*}
\mat{Q}_t^P = \mat{I} - \mat{V}_t^P (\mat{V}_t^P)^\top,
\;\;\;\;
\mat{Q}_t^I = \mat{I} - \mat{V}_t^I (\mat{V}_t^I)^\top,
\;\;\;\;
\mat{Q}_t^D = \mat{I} - \mat{V}_t^D (\mat{V}_t^D)^\top.
\end{equation*}

The following proposition solves the objective function defined in \eqref{eq: objective function} under linearity assumptions.

\begin{restatable}{proposition}{LinearanalyticSolution}
\label{proposition: linear analytic solution optimal control}
Consider the following objective function,
\begin{align}
\min \limits_{\overline{\pi}} \mathbb{E}_{(\mat{x}_0, y) \sim \mathcal{D}} \left[ J(\mat{x}_0, y, \overline{\pi}) \right]
& \vcentcolon = 
\min \limits_{\overline{\pi}} \mathbb{E}_{(\mat{x}_0, y) \sim \mathcal{D}} \left[ \Phi(\mat{x}_T, y) + \sum_{t=0}^{T-1} {\cal L}(\{ \mat{x}_s \}_{s=0}^t, \pi_t, (\mat{Q}_t^P, \mat{Q}_t^I, \mat{Q}_t^D))\right],
\nonumber\\
& {\rm s.t. }
\;
\mat{x}_{t+1} = \boldsymbol\theta_t (\mat{x}_t + \pi_t(\mat{x}_t)).
\label{eq: appendix objective function}
\end{align}

the optimal value function, parametrized as $V(\mat{x}_t) = \mat{x}_t^\top \mat{P}_t \mat{x}_t$, satisfies the Riccati equation:
\begin{equation}
\mat{P}_t
= \frac{1} {2} \mat{Q}_t + \boldsymbol\theta_t^\top \mat{P}_{t+1} \boldsymbol\theta_t - \frac{1} {2} (\mat{Q}_t + 2 \boldsymbol\theta_t^\top \mat{P}_{t+1} \boldsymbol\theta_t)^\top (\mat{Q}_t + 2 \boldsymbol\theta_t^\top \mat{P}_{t+1} \boldsymbol\theta_t + c_t \mat{I})^{-1} (\mat{Q}_t + 2 \boldsymbol\theta_t^\top \mat{P}_{t+1} \boldsymbol\theta_t).
\label{eq: appendix riccati}
\end{equation}

The optimal control solution is given by
\begin{equation}
    \pi_{t}(\mat{x}_t) = -(\mat{Q}_t + c \cdot \mat{I} + 2\boldsymbol\theta_t^\top \mat{P}_{t+1} \boldsymbol\theta_t)^{-1} (\mat{Q}_t + 2 \boldsymbol\theta_t^\top \mat{P}_{t+1} \boldsymbol\theta_t) \mat{x}_t,
    \label{eq: appendix optimal control solution}
\end{equation}
where $\mat{Q}_t = \mat{Q}_t^P + \mat{Q}_t^I + \mat{Q}_t^D$.
\end{restatable}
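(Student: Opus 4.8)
The plan is to solve the trajectory optimization problem in \eqref{eq: appendix objective function} by dynamic programming, exploiting the fact that under Assumption~1 everything is linear-quadratic. First I would set up the value function $V_t(\mat{x}_t) = \min_{\{\pi_s\}_{s\geq t}} \E[\Phi(\mat{x}_T,y) + \sum_{s=t}^{T-1}{\cal L}]$ and posit the quadratic ansatz $V_t(\mat{x}_t) = \mat{x}_t^\top \mat{P}_t \mat{x}_t$, with terminal condition $\mat{P}_T = \mat{0}$ (since the terminal loss $\Phi$ is set to zero during inference, as discussed after \eqref{eq: objective function}). The Bellman recursion reads
\begin{equation*}
\mat{x}_t^\top \mat{P}_t \mat{x}_t = \min_{\mat{u}_t}\Bigl[ {\cal L}(\mat{x}_t,\mat{u}_t) + (\mat{x}_t+\mat{u}_t)^\top\boldsymbol\theta_t^\top \mat{P}_{t+1}\boldsymbol\theta_t(\mat{x}_t+\mat{u}_t)\Bigr],
\end{equation*}
where I would first argue that the running loss \eqref{eq: running loss}, when the embedding functions are the orthogonal projectors $\mat{Q}_t^P,\mat{Q}_t^I,\mat{Q}_t^D$ and one drops the integral/derivative cross terms involving $\{\mat{x}_s\}_{s<t}$ (or absorbs them as state-independent offsets that do not affect the minimizing $\mat{u}_t$ or the quadratic form in $\mat{x}_t$), reduces to $\tfrac12(\mat{x}_t+\mat{u}_t)^\top \mat{Q}_t (\mat{x}_t+\mat{u}_t) + \tfrac{c_t}{2}\mat{u}_t^\top\mat{u}_t$ with $\mat{Q}_t = \mat{Q}_t^P+\mat{Q}_t^I+\mat{Q}_t^D$. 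This is the step I expect to require the most care: making precise exactly which terms are retained so that the clean quadratic structure $V_t(\mat{x}_t)=\mat{x}_t^\top\mat{P}_t\mat{x}_t$ (no linear or constant part) is self-consistent through the recursion.

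Next I would carry out the inner minimization. Writing the bracket as a quadratic in $\mat{u}_t$,
\begin{equation*}
g(\mat{u}_t) = \tfrac12\mat{u}_t^\top(\mat{Q}_t + c_t\mat{I} + 2\boldsymbol\theta_t^\top\mat{P}_{t+1}\boldsymbol\theta_t)\mat{u}_t + \mat{u}_t^\top(\mat{Q}_t + 2\boldsymbol\theta_t^\top\mat{P}_{t+1}\boldsymbol\theta_t)\mat{x}_t + (\text{terms without }\mat{u}_t),
\end{equation*}
I set $\nabla_{\mat{u}_t} g = \mat{0}$, which is legitimate because the Hessian $\mat{M}_t := \mat{Q}_t + c_t\mat{I} + 2\boldsymbol\theta_t^\top\mat{P}_{t+1}\boldsymbol\theta_t$ is positive definite: $\mat{Q}_t$ is a sum of orthogonal projectors hence positive semidefinite, $\mat{P}_{t+1}$ will be shown positive semidefinite by backward induction (base case $\mat{P}_T=\mat0$), and $c_t\mat{I}\succ 0$. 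Solving gives exactly \eqref{eq: appendix optimal control solution}, $\pi_t(\mat{x}_t) = -\mat{M}_t^{-1}(\mat{Q}_t + 2\boldsymbol\theta_t^\top\mat{P}_{t+1}\boldsymbol\theta_t)\mat{x}_t$ (with the paper's notational $c = c_t$).

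Then I would substitute the minimizer back into the bracket to read off $\mat{P}_t$. Collecting the quadratic-in-$\mat{x}_t$ coefficient and using the standard completion-of-square identity for quadratics, the minimum value equals
\begin{equation*}
\mat{x}_t^\top\Bigl[\tfrac12\mat{Q}_t + \boldsymbol\theta_t^\top\mat{P}_{t+1}\boldsymbol\theta_t - \tfrac12(\mat{Q}_t + 2\boldsymbol\theta_t^\top\mat{P}_{t+1}\boldsymbol\theta_t)^\top \mat{M}_t^{-1}(\mat{Q}_t + 2\boldsymbol\theta_t^\top\mat{P}_{t+1}\boldsymbol\theta_t)\Bigr]\mat{x}_t,
\end{equation*}
which matches \eqref{eq: appendix riccati} after recognizing $\mat{M}_t = \mat{Q}_t + 2\boldsymbol\theta_t^\top\mat{P}_{t+1}\boldsymbol\theta_t + c_t\mat{I}$; note the transpose on the leading factor is harmless since all matrices involved are symmetric. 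I would close the induction by noting the Schur-complement form of the bracket shows $\mat{P}_t\succeq 0$, so the Hessian positivity used at the previous step is justified, and that the ansatz is preserved (no linear/constant terms are generated). Finally I would remark that the optimization commutes with the expectation $\E_{(\mat{x}_0,y)\sim\mathcal{D}}$ because the feedback law $\pi_t$ depends only on $\mat{x}_t$ and the dynamics/losses are the same for every sample, so the per-trajectory optimal control is also optimal in expectation, completing the proof.
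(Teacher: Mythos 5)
Your proposal is correct and follows essentially the same route as the paper: a quadratic value-function ansatz $V(\mat{x}_t)=\mat{x}_t^\top\mat{P}_t\mat{x}_t$ with $\mat{P}_T=\mat{0}$, the first-order condition in $\mat{u}_t$ yielding \eqref{eq: appendix optimal control solution}, and back-substitution into the Bellman equation yielding \eqref{eq: appendix riccati}. The cross-term issue you rightly flag as the delicate step is handled in the paper by replacing the running loss with the state-only surrogate in \eqref{eq: running loss upper bound} (bounding the history-dependent terms by a constant), which is the same simplification you propose; your added remarks on positive definiteness of the Hessian and $\mat{P}_t\succeq 0$ by induction are a welcome tightening the paper leaves implicit.
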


We provide an outline of the proof, with a detailed derivation available in Appendix \ref{proof: analytic solution for optimal control}.
The optimal value function $V(\mat{x}_t)$ of the optimal control problem defined in \eqref{eq: appendix objective function} satisfies the Bellman optimality equation,
\begin{equation*}
V(\mat{x}_t)
=
{\cal L}(\{ \mat{x}_s \}_{s=0}^t, \pi_t, (\mat{Q}_t^P, \mat{Q}_t^I, \mat{Q}_t^D))
+
V(\mat{x}_{t+1}),
\;\;\;\;
{\rm s.t.}
\;\;
\mat{x}_{t+1} = \boldsymbol\theta_t (\mat{x}_t + \pi_t(\mat{x}_t)).
\end{equation*}
In the linear case, the optimal value function is parametrized by a quadratic function, expressed as
$V(\mat{x}_t) = \mat{x}_t^\top \mat{P}_t \mat{x}_t$.
By setting the derivative $\frac{d V(\mat{x}_t)} {d \pi_t(\mat{x}_t)}$ to zero,
we arrive at the optimal control solution, as detailed in \eqref{eq: appendix optimal control solution}. 
Furthermore, the Riccati equation in \eqref{eq: appendix riccati} emerges from substituting this optimal control solution into the Bellman optimality equation.

\begin{remark}
As derived in \eqref{eq: appendix optimal control solution}, using a combination of P, I, and D controllers incurs the same computational cost as using a single type of control scheme. 
This is due to the linearity of the control process, where the orthogonal projections onto the state embedding, state integration embedding, and state derivative embeddings can be effectively merged. 
This results in a projection onto the intersecting subspace of the three linear embedding subspaces.
\end{remark}

Starting with a pre-trained LLM, the layer-wise transformations can be linearized to form a linear dynamical system. 
From this, the parameters of the optimal value function $\mat{P}_t$ are computed using the discrete dynamical system outlined in \eqref{eq: appendix riccati}. 
Subsequently, the optimal feedback control solution $\pi_t(\mat{x}_t)$ is constructed from \eqref{eq: appendix optimal control solution}. 
Although this method is feasible, the linearization of a series of transformer layers poses its own set of complexities. 
Moving forward, we propose an analytic solution that does not rely on linearizing the base model, under additional orthogonality assumptions.

\paragraph{An analytic solution under Assumption $2$.}
We further consider the scenario where both embedding manifold and layer-wise transformation are orthogonal.
As a result, the linear combination of orthogonal projections, represented as $\mat{Q}_t = \mat{Q}_t^P + \mat{Q}_t^I + \mat{Q}_t^D$, forms an orthogonal projection itself. 
With these conditions in place, we can then establish an analytic formulation for the optimal control solution, as detailed in the following proposition.

\begin{restatable}{proposition}{analyticSolution}
\label{proposition: analytic solution optimal control}
When the layer-wise transformations are represented as orthogonal matrices,
and the basis of state embedding, state integration embedding, and state derivative embeddings are mutually orthogonal,
the optimal feedback control, denoted as $\pi_{t} (\mat{x}_t)$, can be computed as follows:
\begin{equation*}
\pi_{t} (\mat{x}_t)
= 
- \mat{V}_t
\begin{bmatrix}
0 & 0 & \cdots & 0 & 0 \\
0 & 0 & \cdots & 0 & 0\\
\vdots & \vdots & \ddots & 0 & 0 \\
0 & 0 & \cdots & 1 - \frac{c} {1 + \lambda_{t+1} + c} & 0 \\
0 & 0 & \cdots & 0 & 1 - \frac{c} {1 + \lambda_{t+1} + c} \\
\end{bmatrix} 
\mat{V}_t^\top \mat{x}_t,
\end{equation*}
where the time-varying parameter $\lambda_t$ is governed by a backward difference equation $\lambda_t = \frac{c (1 + \lambda_{t+1})} {1 + \lambda_{t+1} + c}$, 
with the terminal condition specified as $\lambda_{T} = 0$. 
\end{restatable}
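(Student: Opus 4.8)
The plan is to build directly on Proposition~\ref{proposition: linear analytic solution optimal control}: under Assumption~1 the optimal value function is the quadratic $V(\mat{x}_t)=\mat{x}_t^\top \mat{P}_t\mat{x}_t$, with $\mat{P}_t$ given by the Riccati recursion in~\eqref{eq: appendix riccati} and the feedback by~\eqref{eq: appendix optimal control solution}, so all that remains is to solve that recursion in closed form when $\boldsymbol\theta_t$ is orthogonal and $\mat{Q}_t=\mat{Q}_t^P+\mat{Q}_t^I+\mat{Q}_t^D$ is an orthogonal projection. The cleanest route is to pass to a layer-dependent orthonormal basis that simultaneously diagonalizes $\mat{Q}_t$ and trivializes the dynamics. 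Concretely, take $\mat{V}_T$ to be an orthonormal eigenbasis of $\mat{Q}_T$, ordered so that the eigenvalue-$1$ coordinates (the embedding-error directions) come last, and propagate it to earlier layers by $\mat{V}_t=\boldsymbol\theta_t^\top\mat{V}_{t+1}$. Using that $\boldsymbol\theta_t$ is orthogonal and that the embedding subspaces are carried along the uncontrolled dynamics, i.e.\ $\mat{Q}_{t+1}=\boldsymbol\theta_t\mat{Q}_t\boldsymbol\theta_t^\top$, the matrix $\mat{V}_t$ is again an orthonormal eigenbasis of $\mat{Q}_t$ with the same $0/1$ pattern, while in the coordinates $\tilde{\mat{x}}_t=\mat{V}_t^\top\mat{x}_t$ the update $\mat{x}_{t+1}=\boldsymbol\theta_t(\mat{x}_t+\pi_t(\mat{x}_t))$ becomes the pure shift $\tilde{\mat{x}}_{t+1}=\tilde{\mat{x}}_t+\tilde{\mat{u}}_t$ and the running loss becomes $\tfrac12\sum_i d_i(\tilde x_t^{(i)}+\tilde u_t^{(i)})^2+\tfrac{c}{2}\sum_i(\tilde u_t^{(i)})^2$ with $d_i\in\{0,1\}$.

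The problem then decouples into $d$ independent scalar trajectory-optimization problems. For a coordinate with $d_i=0$ only $\|\tilde u\|$ is penalized, so the optimal control is $0$; for a coordinate with $d_i=1$ we are left with a scalar LQR-type recursion, and writing its value function as $\tfrac12\lambda_t(\tilde x^{(i)})^2$, the Bellman equation $\tfrac12\lambda_t x^2=\min_u\big[\tfrac12(x+u)^2+\tfrac{c}{2}u^2+\tfrac12\lambda_{t+1}(x+u)^2\big]$ is minimized at $u=-\big(1-\tfrac{c}{1+\lambda_{t+1}+c}\big)x$, and substituting back gives $\lambda_t=\tfrac{c(1+\lambda_{t+1})}{1+\lambda_{t+1}+c}$ with $\lambda_T=0$ (since $\Phi\equiv0$, hence $\mat{P}_T=\vzero$). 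Reassembling the scalar solutions yields $\pi_t(\mat{x}_t)=-\big(1-\tfrac{c}{1+\lambda_{t+1}+c}\big)\mat{Q}_t\mat{x}_t$, and writing $\mat{Q}_t=\mat{V}_t\,\mathrm{diag}(0,\dots,0,1,\dots,1)\,\mat{V}_t^\top$ gives exactly the matrix form in the statement. Equivalently, one can skip the change of basis and verify directly that the ansatz $\mat{P}_t=\tfrac12\lambda_t\mat{Q}_t$ is self-consistent in~\eqref{eq: appendix riccati}, using that $\mat{Q}_t$ is idempotent (so $(\alpha\mat{Q}_t+c\mat{I})^{-1}$ acts as $(\alpha+c)^{-1}$ on $\mathrm{range}(\mat{Q}_t)$ and $c^{-1}$ on $\ker(\mat{Q}_t)$) and that $\boldsymbol\theta_t^\top\mat{P}_{t+1}\boldsymbol\theta_t=\tfrac12\lambda_{t+1}\mat{Q}_t$; plugging this into~\eqref{eq: appendix optimal control solution} produces the same answer.

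The main obstacle is the bookkeeping around the simultaneous diagonalization: one must ensure that a single orthonormal basis $\mat{V}_t$ can be chosen at each layer that both diagonalizes $\mat{Q}_t$ and conjugates $\boldsymbol\theta_t$ to the identity between consecutive layers. This is exactly where the orthogonality of $\boldsymbol\theta_t$, the mutual orthogonality of $\mat{V}_t^P,\mat{V}_t^I,\mat{V}_t^D$ (which is what makes $\mat{Q}_t$ genuinely have spectrum in $\{0,1\}$ rather than merely being positive definite), and the cross-layer consistency $\mat{Q}_{t+1}=\boldsymbol\theta_t\mat{Q}_t\boldsymbol\theta_t^\top$ all enter; once this normal form is established, the remaining argument is the one-dimensional Riccati recursion, which is routine. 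A secondary point to flag is the treatment of the integral and derivative offsets $\sum_{s<t}\mat{x}_s$ and $\mat{x}_{t-1}$ inside the running loss~\eqref{eq: running loss}: these do not affect the quadratic part of the value function, hence not the feedback gain, but the purely quadratic ansatz $V(\mat{x}_t)=\mat{x}_t^\top\mat{P}_t\mat{x}_t$ already discards them, so this should be stated explicitly rather than left implicit.
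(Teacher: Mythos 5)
Your proposal is correct and follows essentially the same route as the paper: the paper also builds on Proposition~\ref{proposition: linear analytic solution optimal control} and proves by backward induction that the Riccati solution has the form $\mat{P}_t = \tfrac12 \mat{V}_t\,\mathrm{diag}(0,\dots,0,\lambda_t,\dots,\lambda_t)\,\mat{V}_t^\top$ (i.e.\ your ansatz $\mat{P}_t=\tfrac12\lambda_t\mat{Q}_t$), using exactly the basis-propagation identity $\boldsymbol\theta_t^\top\mat{V}_{t+1}=\mat{V}_t$ and arriving at the same scalar recursion $\lambda_t=\tfrac{c(1+\lambda_{t+1})}{1+\lambda_{t+1}+c}$ before substituting into \eqref{eq: appendix optimal control solution}. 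Your coordinate-wise decoupling is only a cosmetic repackaging of that induction, and the two caveats you flag (the cross-layer consistency of the embedding subspaces and the dropped integral/derivative offsets) are likewise assumed implicitly in the paper's derivation.
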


A brief overview of the proof is presented here, while a more detailed derivation can be found in Appendix \ref{proof: analytic solution for optimal control}.
The condition for the embedding subspaces to be orthogonal guarantees that the linear combination represented by $\mat{Q}_t = \mat{Q}_t^P + \mat{Q}_t^I + \mat{Q}_t^D$ forms an orthogonal projection. 
Moreover, the orthogonality in layer-wise transformations simplifies the Riccati equation.
This simplification leads to a recursive approach to formulating control regularization.

When $c = 0$, it holds that $\lambda_t = 0$ for every $t$, and the optimal feedback control corresponds to the orthogonal projection onto the orthogonal complement of the linear subspace.
On the other hand, for $c > 0$, the approach yields a time-varying regularization in control across different layers. 
This analytical solution, which assumes linear orthogonality, is independent of the underlying model. 
Therefore, the time-variant control regularization $c_t$ can be pre-calculated prior to the inference process.

\subsection{Theoretical Error Analysis}
\label{section: Theoretical Error Analysis}
Under both assumptions $1$ and $2$ defined in Section \ref{section: An Analytic Solution for Fast Inference},
the controlled dynamics, under some perturbations, are formulated as follows:
\begin{equation*}
    \overline{\mat{x}}_{t+1} = \boldsymbol\theta_t (\overline{\mat{x}}_{t} + \pi_t(\overline{\mat{x}}_{t})),
    \;\;
    \overline{\mat{x}}_{0} = \mat{x}_0 + \mat{z},
\end{equation*}
where $\mat{z}$ represents an arbitrary perturbation decomposable into two mutually orthogonal components
$\mat{z} = \mat{z}^{\parallel} \oplus \mat{z}^{\perp}$: $\mat{z}^{\parallel}$, aligning within the data embedding subspace, and $\mat{z}^{\perp}$, orthogonal to the data manifold.
We represent the state trajectory in the absence of input perturbation and without any applied control as
$\mat{x}_{t+1} = \boldsymbol\theta_t (\mat{x}_{t})$,
the following theorem quantifies the error as $\lVert \overline{\mat{x}}_{t} - \mat{x}_t \rVert_2^2$,
which evaluates the difference between the perturbed state after control correction and the original state from unperturbed input data.

\begin{restatable}{theorem}{errorEstimation}
\label{theorem: error estimation}
For any time step $t \geq 1$, assuming that each $\boldsymbol\theta_t$ is an orthogonal matrix,
we have the following error computation:
\begin{equation*}
    \lVert \overline{\mat{x}}_{t} - \mat{x}_{t} \rVert_2^2 = \prod_{s=0}^{t-1} \alpha_s^2 \cdot \lVert \mat{z}^{\perp} \rVert_2^2 + \lVert \mat{z}^{\parallel} \rVert_2^2,
\end{equation*}
where $\alpha_t$ is a time-varying parameter defined in relation to the control regularization $c$, and $\lambda_t$ are auxiliary variables, as follows:
\begin{gather*}
\alpha_t = \frac{c} {1 + \lambda_{t+1} + c}, \quad
\lambda_T = 0, \quad \lambda_{T-1} = \frac{c}{1+c}, \quad \lambda_t = \frac{c (1 + \lambda_{t+1})} {1 + c + \lambda_{t+1}}.
\end{gather*}
\end{restatable}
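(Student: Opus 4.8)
The plan is to first convert the feedback law of Proposition~\ref{proposition: analytic solution optimal control} into an explicit linear map for the controlled trajectory. Writing the diagonal matrix there as $\mathrm{diag}(\mat{I}_r,\,(1-\alpha_t)\mat{I}_{d-r})$ with $\alpha_t=\frac{c}{1+\lambda_{t+1}+c}$, and using $\mat{V}_t\mat{V}_t^\top=\mat{I}$, one gets $\overline{\mat{x}}_t+\pi_t(\overline{\mat{x}}_t)=\mat{V}_t\,\mathrm{diag}(\mat{I}_r,\,\alpha_t\mat{I}_{d-r})\,\mat{V}_t^\top\overline{\mat{x}}_t$, so that the controlled update is $\overline{\mat{x}}_{t+1}=\boldsymbol\theta_t\mat{V}_t\,\mathrm{diag}(\mat{I}_r,\,\alpha_t\mat{I}_{d-r})\,\mat{V}_t^\top\overline{\mat{x}}_t$. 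In words: the control leaves the $r$ coordinates spanning the embedding subspace untouched, contracts the remaining $d-r$ coordinates by $\alpha_t$, and then the orthogonal layer map $\boldsymbol\theta_t$ is applied.

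Next I would set up the subspace bookkeeping. Let $\mat{V}_t^{\parallel}$ collect the columns of $\mat{V}_t$ on which the control acts as the identity (the leading block) and $\mat{V}_t^{\perp}$ the complementary columns, with associated orthogonal projections $\mat{P}_t^{\parallel}$ and $\mat{P}_t^{\perp}=\mat{I}-\mat{P}_t^{\parallel}$. The crucial structural fact — and the step I expect to be the main obstacle — is that the orthogonal layer maps are compatible with the embedding sequence: $\boldsymbol\theta_t$ carries $\mathrm{col}(\mat{V}_t^{\parallel})$ onto $\mathrm{col}(\mat{V}_{t+1}^{\parallel})$, and, since $\boldsymbol\theta_t$ is orthogonal and the two subspaces have equal dimension, it also carries $\mathrm{col}(\mat{V}_t^{\perp})$ onto $\mathrm{col}(\mat{V}_{t+1}^{\perp})$. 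This compatibility is what makes the parallel/perpendicular splitting invariant under both the controlled and the uncontrolled dynamics; in particular it gives $\mat{x}_t\in\mathrm{col}(\mat{V}_t^{\parallel})$ for every $t$ by induction from $\mat{x}_0$ lying on the data manifold. I would state this explicitly as the working hypothesis inherited from Assumptions~1 and~2 (the embedding subspaces being generated by propagating the clean data through the orthogonal layers).

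With the splitting in place, I would decompose the error $\mat{e}_t:=\overline{\mat{x}}_t-\mat{x}_t=\mat{e}_t^{\parallel}\oplus\mat{e}_t^{\perp}$ and push it through one step. Using $\mat{x}_{t+1}=\boldsymbol\theta_t\mat{x}_t$ together with $\mat{x}_t^{\perp}=\vzero$ and the controlled update above, the two components decouple: $\mat{e}_{t+1}^{\parallel}=\boldsymbol\theta_t\mat{e}_t^{\parallel}$ and $\mat{e}_{t+1}^{\perp}=\alpha_t\boldsymbol\theta_t\mat{e}_t^{\perp}$, with initial data $\mat{e}_0^{\parallel}=\mat{z}^{\parallel}$ and $\mat{e}_0^{\perp}=\mat{z}^{\perp}$. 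Since each $\boldsymbol\theta_t$ is norm-preserving, unrolling these recursions gives $\lVert\mat{e}_t^{\parallel}\rVert_2=\lVert\mat{z}^{\parallel}\rVert_2$ and $\lVert\mat{e}_t^{\perp}\rVert_2=\big(\prod_{s=0}^{t-1}\lvert\alpha_s\rvert\big)\lVert\mat{z}^{\perp}\rVert_2$ for every $t\geq 1$.

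Finally, because $\mat{e}_t^{\parallel}\perp\mat{e}_t^{\perp}$ (they lie in the fixed orthogonal splitting at time $t$), the Pythagorean identity yields $\lVert\overline{\mat{x}}_t-\mat{x}_t\rVert_2^2=\lVert\mat{e}_t^{\parallel}\rVert_2^2+\lVert\mat{e}_t^{\perp}\rVert_2^2=\prod_{s=0}^{t-1}\alpha_s^2\,\lVert\mat{z}^{\perp}\rVert_2^2+\lVert\mat{z}^{\parallel}\rVert_2^2$, which is the claimed formula; the stated recursion $\lambda_t=\frac{c(1+\lambda_{t+1})}{1+c+\lambda_{t+1}}$ and the values $\lambda_T=0$, $\lambda_{T-1}=\frac{c}{1+c}$ are exactly those of Proposition~\ref{proposition: analytic solution optimal control}, so no additional work is needed there. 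The only genuinely delicate point is justifying the $\boldsymbol\theta_t$-compatibility of the embedding subspaces; once that is granted, the argument is a one-step induction combined with the isometry of the orthogonal layer maps.
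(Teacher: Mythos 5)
Your argument is correct and rests on the same two pillars as the paper's proof --- the one-step identity $\mat{I}-\mat{K}_t=\alpha_t\mat{I}+(1-\alpha_t)\mat{P}_t$ (Lemma \ref{lemma: control matrix}) and the orthogonal splitting $\mat{z}=\mat{z}^{\parallel}\oplus\mat{z}^{\perp}$ --- but you organize the bookkeeping differently. The paper pulls every contraction factor back to time $0$: it defines the conjugated projections $\mat{P}_t^{s+1}=\boldsymbol\theta_{t-s-1}^{-1}\mat{P}_t^{s}\boldsymbol\theta_{t-s-1}$ and $\mat{G}_t^s=\alpha_t\mat{I}+(1-\alpha_t)\mat{P}_t^s$, uses the commutation $\boldsymbol\theta_{t-s-1}\mat{G}_t^{s+1}=\mat{G}_t^s\boldsymbol\theta_{t-s-1}$ to rewrite the accumulated error as $(\boldsymbol\theta_{t-1}\cdots\boldsymbol\theta_0)(\mat{G}_{t-1}^{t-1}\cdots\mat{G}_0^0)\mat{z}$ (Lemmas \ref{lemma:pts} and \ref{lemma: gts}), and then collapses the product into $\prod_{s}\alpha_s\cdot\mat{I}+(1-\prod_{s}\alpha_s)\mat{P}_0$ (Lemma \ref{lemma: Ft}) before expanding the norm. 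You instead keep the splitting attached to each layer and run a forward induction on the decomposed error, $\mat{e}_{t+1}^{\parallel}=\boldsymbol\theta_t\mat{e}_t^{\parallel}$ and $\mat{e}_{t+1}^{\perp}=\alpha_t\boldsymbol\theta_t\mat{e}_t^{\perp}$, which dispenses with the oblique-projection machinery and makes the final Pythagorean step immediate. The two routes are equivalent, and both hinge on exactly the structural fact you flag as the delicate point: that each orthogonal $\boldsymbol\theta_t$ carries $Z_t^{\parallel}$ onto $Z_{t+1}^{\parallel}$ (so that $\mat{x}_t$ stays in the null space of $\mat{K}_t$ and the splitting is invariant). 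The paper relies on this same hypothesis --- it appears as $\boldsymbol\theta_t^\top\mat{V}_{t+1}=\mat{V}_t$ in the proof of Lemma \ref{Lemma: solution riccati equation} and as the range identification in Lemma \ref{lemma:pts} --- without deriving it from the construction of the embedding subspaces, so making it an explicit working hypothesis is if anything the more honest presentation. One small caveat: for your identity $\overline{\mat{x}}_t+\pi_t(\overline{\mat{x}}_t)=\mat{V}_t\,\mathrm{diag}(\mat{I}_r,\alpha_t\mat{I}_{d-r})\mat{V}_t^\top\overline{\mat{x}}_t$ you need $\mat{V}_t$ to be a full $d\times d$ orthogonal matrix ($\mat{V}_t\mat{V}_t^\top=\mat{I}$), which is also what the paper tacitly assumes in Lemma \ref{lemma: control matrix}.
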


The detailed derivation is provided in Appendix \ref{section: error analysis}. 
This computation rigorously demonstrates that perturbations, specifically those spanning the orthogonal complement denoted by $\mat{z}^{\perp}$, exhibit a decay phenomenon during the process of forward propagation.
Furthermore, in scenarios where control parameters are subject to regularization constraints, when $c > 0$, our analysis reveals nuanced insights. 
We establish that the optimal control solution, which is derived by considering the intricate interplay among different transformation layers, adheres to these constraints while optimizing performance,
which captures the complex dynamics between layers.

Theorem \ref{theorem: error estimation} outlines how errors in state computations at any given time step are influenced by input perturbations represented by $\mat{z}$, despite these perturbations existing within the continuous domain of $\mathbb{R}^d$,
this setting fits real-world adversarial attacks on LLMs, which involve modifying discrete elements, such as tokens or characters, in the input text. 
The act of modifying a word or substring through an adversarial attack leads to a discrepancy between the embedding sequences of the original and modified input tokens, manifesting as the perturbation vector $\mat{z}$ within the input embedding space. 
Specifically, the embedding manifolds, derived from unperturbed training data, capture the structure of this data in a lower-dimensional subspace. 
Adversarial examples, meanwhile, are designed to be semantically similar to the original input yet induce a marked divergence in the embedding space during the model's forward propagation. 
Under these circumstances, the difference between the embedding sequences of the original input and the adversarial example can be quantified and adjusted within the PID control framework.
We provide empirical error computation of Theorem \ref{theorem: error estimation} in Section \ref{section: ablation study}.

\subsection{Additional Details for Constructing PID Control}
\label{section: additional details for constructing pid control}
Here, we provide details on implementing the proposed PID control method.
We begin with constructing the linear embedding basis 
$\mat{V}_t^P$,
$\mat{V}_t^I$,
and $\mat{V}_t^D$ from training dataset.
In NLP tasks,
the hidden states are generally represented as $2$-dimensional matrix (sequence of embedding vectors), $\mat{X}_t \in \mathbb{R}^{l \times d}$, where $l$ denotes the temporal length.
Given $N$ pieces of data sampled from the data distribution $\mathcal{D}$ ($N$ training data),
we can concatenate the hidden states as a $3$-way tensor $\ten{X}_t \in \mathbb{R}^{N \times l \times d}$,
and apply Tucker decomposition \citep{de2000multilinear,de2000best,kolda2009tensor} (known as high-order singular value decomposition) to generate linear embedding basis along both temporal and state embedding dimensions.

\begin{algorithm}[tb]
   \caption{Tucker Decomposition.}
   \label{alg: tucker decomposition}
\begin{algorithmic}
    \State {\bfseries Input:} An $I$-way tensor $\ten{X}$.
    \State {\bfseries Output:} Core tensor $\ten{G}$, orthogonal basis $\mat{V}^1$, $\mat{V}^2$, $\cdots$, $\mat{V}^I$.
    \For{$i = 1$ to $I$}
    \State $\mat{X}^i$ = {\rm Reshape} ($\ten{X}, i$),
    {\color{blue} // Reshape the tensor along the $\rm i^{th}$ mode.}
    \State $\mat{U}^i$, $\mat{S}^i$, $\mat{V}^i$ = {\rm SVD} ($\mat{X}^i$),
    {\color{blue} // Perform singular value decomposition on the reshaped tensor.}
    \State {\rm Save the singular vectors $\mat{V}^i$ as the orthogonal basis.}
    \EndFor
    \State $\ten{G} = \ten{X}$, {\color{blue} // Initialize the tensor core with the $I$-way tensor $\ten{X}$.}
    \For{$i = 1$ to $I$}
    \State $\ten{G}$ = $\ten{G} \times_i \mat{V}^i$,
    {\color{blue} // Multiply the core tensor by the $\rm i^{th}$ orthogonal basis.}
    \EndFor
\end{algorithmic}
\end{algorithm}

Tucker Decomposition is an extension of the traditional singular value decomposition to higher-order tensors. 
Mathematically,
Tucker decomposition represents an $I$-way tensor as 
$\ten{X} \approx \ten{G} \times_1 \mat{V}^{(1)} \times_2 \mat{V}^{(2)} \times_3 \cdots \times_I \mat{V}^{(I)}$,
where $\ten{G}$ is the core tensor, which governs the interaction between different modes,
$\mat{V}^i$ are orthogonal bases corresponding to the principal components in each tensor mode,
$\times_n$ is the mode-$n$ tensor product.
The  mode-$n$ product of a tensor $\ten{A} \in \mathbb{R}^{I_1 \times \dots \times I_{n} \times \dots \times I_d}$ with a matrix $\mat{U} \in \mathbb{R}^{J \times I_n}$ is defined as
\begin{equation*}
\ten{B} = \ten{A}\times_n \mat{U} \Longleftrightarrow
b_{i_1\dots i_{n-1} j i_{n+1} \dots i_d}=\sum_{i_n=1}^{I_n} a_{i_1 \dots i_n \dots i_d} \cdot u_{j i_n},
\end{equation*}
where the $(i_1, i_2, \cdots, i_d)$-th elements of $\ten{A}$ and $\ten{B}$ are denoted as $a_{i_1 i_2 \cdots i_d}$ and $b_{i_1 i_2 \cdots i_d}$, respectively.

An implementation of Tucker decomposition is detailed in Algorithm \ref{alg: tucker decomposition}.
Along each of the $I$ modes,
the concatenated high-dimensional states $\ten{X}$ are reshaped along the $\rm i^{th}$ dimension,
which is used to compute the orthogonal basis from singular value decomposition.
The core tensor $\ten{G}$ is computed by multiplying the states $\ten{X}$ with each of the $I$ basis along each mode.
The low-rank reconstruction of concatenated states $\ten{X}$ can be obtained by
$\ten{G} \times_1 \mat{V}^1 \times_2 \mat{V}^2 \times_3 \cdots \times_I \mat{V}^I$.

Given a pre-trained LLM (naively trained or robustly trained), 
we collect the concatenated states from training data,
which results in a set of $3$-way tensors $\{ \ten{X}_t \}_{t=0}^{T-1}$.
Then Tucker decomposition is applied at every $\ten{X}_t$ (refer Algorithm \ref{alg: tucker decomposition}). 
Extending this to integral and derivative controls is straightforward,
as one can substitute the concatenated states $\ten{X}$ by either the summation of past states $\ten{X} = \sum_{s=0}^t \ten{X}_s$ or the subtraction of two consequential states $\ten{X} = \ten{X}_t - \ten{X}_{t-1}$.
Using the linear embedding bases $\mat{V}_t^P$, $\mat{V}_t^I$, and $\mat{V}_t^D$ obtained from Tucker decomposition, the construction of the feedback controller is achieved by adhering to the methodology outlined in Proposition \ref{proposition: analytic solution optimal control}.

% we apply Tucker decomposition (Algorithm \ref{alg: tucker decomposition}) on the concatenated states $\ten{X}$, integration of past states $\sum_{s=0}^t \ten{X}_s$, and derivative of states $\ten{X}_t - \ten{X}_{t-1}$.
% The outcome is six linear embedding functions, representing orthogonal projections along the temporal and embedding dimensions for P, I, D controls. 
% Subsequently, these linear embedding functions for state, integration, and derivative are unified into a singular function. 
% This results in two distinct embedding functions, denoted $\mat{Q}_t^{\text{seq}}$ and $\mat{Q}_t^{\text{emb}}$, corresponding to orthogonal complements in temporal and embedding dimensions, respectively. 
% The running loss for the PID control is conceptualized as follows:
% \begin{equation*}
% {\cal L}(\{ \mat{x}_s \}_{s=0}^\top, \pi_t, (\mat{Q}_t^{seq}, \mat{Q}_d^{emb})) 
% =
% \frac{1} {2}
% \lVert 
% \mat{Q}_t^{seq} (\mat{x}_t + \pi_t(\mat{x}_t))
% \rVert_2^2
% +
% \frac{1} {2}
% \lVert 
% \mat{Q}_t^{emb} (\mat{x}_t + \pi_t(\mat{x}_t))^\top
% \rVert_2^2
% + 
% \frac{c} {2}
% \lVert
% \pi_t(\mat{x}_t)
% \rVert_2^2.
% \end{equation*}
% The running loss measures the controlled states from both temporal and embedding dimensions,
% and the orthogonal projections $\mat{Q}_t^{seq}$ and $\mat{Q}_t^{emb}$ are computed offline with training data.

\section{Numerical Experiments}
\label{section: numerical experiments}
In this section,
we first discuss experimental setup in Section \ref{section: experimental setup}.
In Sections \ref{section: experimental results adversarial example} and \ref{section: experimental results adversarial dataset},
we assess the performance of the proposed PID control framework against various baseline methods across multiple NLP tasks.
Subsequently, in Section \ref{section: ablation study}, an ablation study is conducted, providing exploratory justification for the proposed approach.

\subsection{Experimental Setup}
\label{section: experimental setup}
\paragraph{Evaluation methods:}
We consider both adversarial attack algorithms (e.g. A2T, PSO, TextBugger, TextFooler), applied on the SNLI \citep{bowman2015large}, MNLI datasets \citep{williams2018broad} and adversarial datasets (e.g. ANLI) to evaluate the robustness of the proposed PID control and baselines.
\begin{itemize}
    \item 
    \textbf{A2T} (Attacking to Training \citep{yoo2021towards})
    utilizes a cost-effective gradient-based technique to rank the significance of words. 
    This approach encompasses the iterative replacement of each word with synonyms sourced from counter-fitted word embeddings.
    \item
    \textbf{PSO} \citep{zang2020word} exploits a population of interacting individuals to iteratively search for the optimal solution in the specific space.
    \item 
    \textbf{TextBugger} \citep{li2019textbugger} finds important words by computing the Jacobian matrix
    of the model and then chooses an optimal perturbation from the generated perturbations.
    \item 
    \textbf{TextFooler} \citep{jin2020bert} is the state-of-the-art word-level adversarial attack method to generate adversarial examples. 
    This technique identifies
    the important words for the target model and subsequently prioritizes their replacement with the most semantically similar and
    grammatically correct words.
    This process continues until there is a discernible shift in the model's prediction.
    \item 
    Adversarial NLI (\textbf{ANLI}) \citep{nie2020adversarial}
    is a large-scale NLI benchmark,
    This dataset was curated through an iterative process that incorporates both human and model inputs in an adversarial loop, targeting specific models for attack. 
    The ANLI dataset is particularly potent as an adversarial tool, demonstrating a significant capability to diminish the accuracy of pre-trained models.
\end{itemize}

\paragraph{Baseline methods:}
This study examines two baseline methods focused on adversarial training. 
The Naive adversarial training (\textbf{AT}), as proposed by \citet{yoo2021towards}, employs the A2T attack for its adversarial training process. 
\textbf{FreeLB}, introduced by \citet{zhu2019freelb}, implements adversarial training in language models during the fine-tuning stage, aiming to enhance both generalization and robustness. 
It is noteworthy that the PID control method, in contrast to these adversarial training-based approaches, offers a distinct perspective on enhancing model robustness without knowing the attack type in advance. 
It can be applied to models that have undergone adversarial training to further improve their robustness.

We fine-tune four baseline models using LoRA \citep{hu2021lora},
namely \textbf{distilbert} \citep{sanh2019distilbert}, \textbf{BERT-large} \citep{kenton2019bert}, \textbf{RoBERTaBase} and \textbf{RoBERTaLarge} \citep{liu2019roberta}.

\paragraph{PID control implementation details:}
Using a pre-trained model (e.g., BERT), we select training data that this model can accurately predict. 
Next, we simulate forward propagation using the pre-trained model on this specific set of training data, which generates a collection of $3$-dimensional tensors, denoted as $\{ \ten{X}_t \}_{t=0}^{T-1}$. 
Following this, we employ Algorithm \ref{alg: tucker decomposition} on each tensor to determine the basis for a linear embedding subspace (see Section \ref{section: additional details for constructing pid control}).
The dimension of this subspace is chosen based on the criterion that it must account for $99 \%$ of the total variance observed (this is done by accumulating the singular values). 
Finally, the optimal solution outlined in Proposition \ref{proposition: analytic solution optimal control} is implemented to generate a time-dependent control regularization parameter.

\paragraph{Threat model:}
In this work, we consider word/token level adversarial attacks that manipulate discrete tokens or characters in the input text.
We verify this from both empirical and theoretical perspectives.
In the numerical experiments, 
we consider a range of adversarial attacks, including A2T, PSO, TextBugger, and TextFooler. 
These adversarial attacks aim to cause misclassification by modifying the tokens of the input string while maintaining the same semantic meaning.

% We evaluate all controlled models under an ``oblivious attack'' setting.
% This consideration is general, e.g. \cite{liao2018defense} has adopted this setting in the previous NeurIPS competition on defense against adversarial attacks. 
% In this setting, the pre-trained models are fully accessible to an attacker, but the control information is not released. 
% Meanwhile, the controllers do not know the incoming attack algorithms.

\subsection{Robustness Against Adversarial Examples}
\label{section: experimental results adversarial example}
\begin{figure}[t]
	\centering
	\begin{minipage}{0.35\linewidth}
	\centering
	    \includegraphics[width = \linewidth]{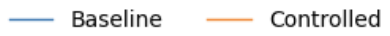}
	\end{minipage}
 
	\begin{minipage}{.24\linewidth}
	\centering
	    \includegraphics[width = \linewidth]{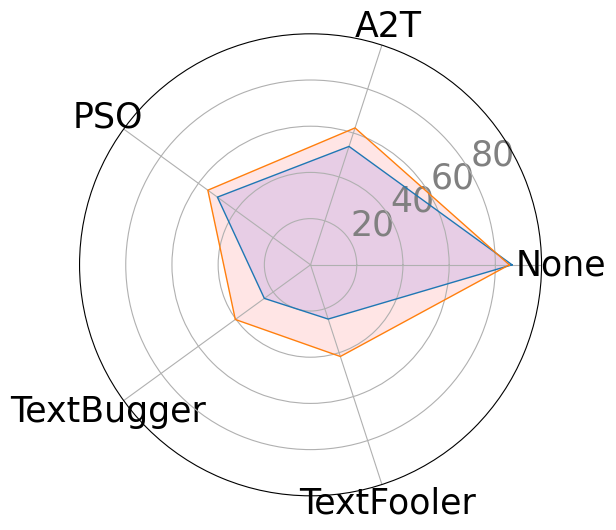}
	    (a): Distilbert (SNLI)
	\end{minipage}
	\begin{minipage}{.24\linewidth}
	\centering
	    \includegraphics[width = \linewidth]{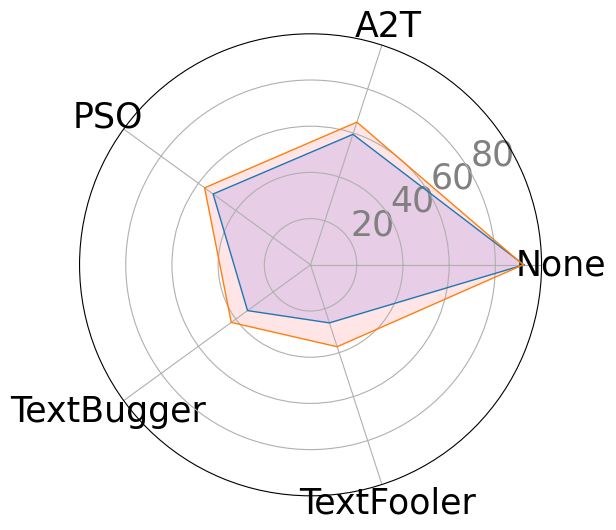}
	    (b): RoBERTa (SNLI)
	\end{minipage}
	\begin{minipage}{.24\linewidth}
	\centering
	    \includegraphics[width = \linewidth]{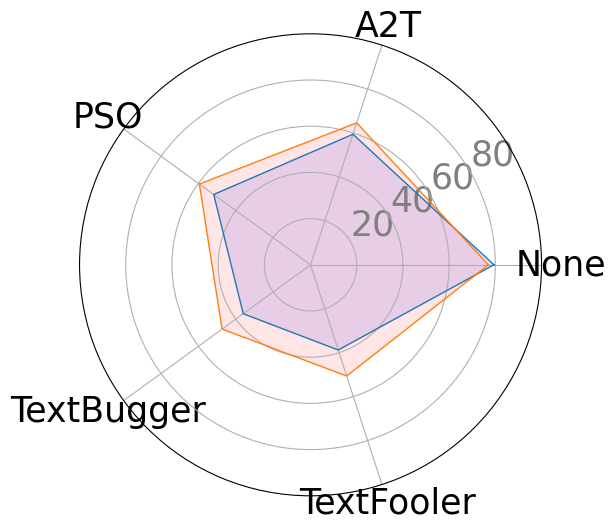}
	    (c): Distilbert (MNLI)
	\end{minipage}
	\begin{minipage}{.24\linewidth}
	\centering
	    \includegraphics[width = \linewidth]{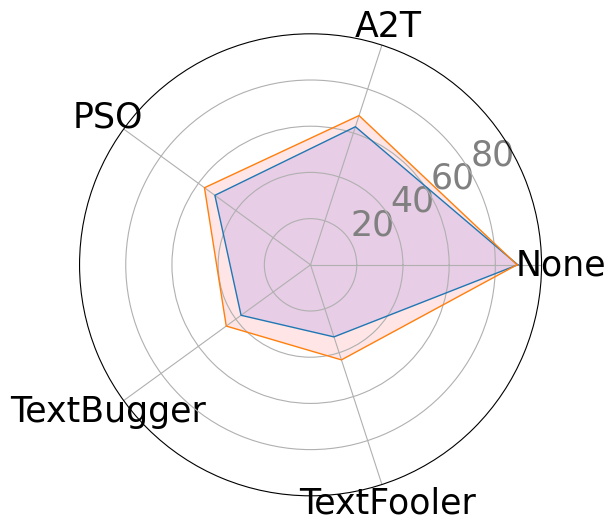}
	    (d): RoBERTa (MNLI)
	\end{minipage}
    \caption{(a) and (b) are radar plots that summarize Distilbert and RoBERTaLarge in Table \ref{table: snli measurement} for SNLI dataset, respectively.
    (c) and (d) are radar plots that summarize Distilbert and RoBERTaLarge in Table \ref{table: mnli measurement} for MNLI dataset, respectively.
    }
    \label{figure:  radar plot}
\end{figure}

Here, we empirically validate the robustness improvement of employing the proposed PID control on pre-trained LLMs.
In Figure \ref{figure:  radar plot} (a), a radar plot is presented to illustrate the comparative performance between the baseline and controlled models, utilizing the DistilBERT architecture and evaluated on the SNLI dataset. 
This demonstrates that the employment of PID control significantly improves model robustness against all four distinct types of perturbations, with a negligible impact on performance with unaltered data (denoted as None).
Shifting to a different model architecture, Figure \ref{figure:  radar plot} (b) reveals that applying the proposed PID control approach to the RoBERTa model yields analogous enhancements in robustness.
For more challenging scenarios, Figures \ref{figure:  radar plot} (c) and (d) detail the performance of the MNLI dataset. 
In these plots, both the DistilBERT and RoBERTa architectures are examined. 
These figures showcase that, despite the increased complexity of the MNLI dataset, the PID control method consistently maintains robustness improvements. 
The increased complexity of the MNLI dataset poses additional challenges in creating embedding subspaces, making it more difficult to accurately represent state, state integration, and state derivatives with linear embeddings. 
The plots distinctly highlight that the controlled models exhibit increased resistance to a broader spectrum of linguistic perturbations and complexities, without significant trade-offs in overall accuracy. 
This underlines the efficacy of PID control in enhancing model robustness across different architectures and datasets.

More detailed comparisons of performance between baseline and controlled models on the SNLI and MNLI datasets are provided in Tables \ref{table: snli measurement} and \ref{table: mnli measurement} in Appendix \ref{proof: additional numerical experiments}.
When a method's accuracy surpasses others by more than $1 \%$, it's highlighted in red. 
It is evident that the proposed PID control method significantly enhances the robustness of both standard and robustly trained LLMs. 
The enhancement is more pronounced in standard trained models, which are generally more vulnerable to adversarial attacks. 
On average, the PID control method yields an improvement of nearly $10 \%$ in standard models and about $5 \%$ in robustly trained models, including both AT and FreeLB training.

In addition,
we present the numerical results of OPT-1.3B.
OPT-1.3B is a decoder-based large language model that contains 1.3 billion model parameters.
For the proposed PID control, we follow the same P-D control implementation (proportional-derivative) as done in all numerical experiments from the paper.
Table \ref{table: opt 1b} demonstrates that the controlled OPT-1.3B model consistently improves the robustness performance against all four types of adversarial attacks.
Specifically,
on the SNLI dataset,
the average improvement is over $20 \%$ compared with the base model.
On a more challenging MNLI dataset, with only a $2.5 \%$ accuracy drop on the unperturbed testing dataset.
The improvement reaches $21 \%$ against the TextBugger attack, and $11 \%$ on both A2T and PSO attacks.

\begin{table}[ht]
\caption{Measurement on SNLI dataset: baseline model / controlled model}
\begin{center}
\begin{tabular}{ p{1.2cm}|p{2.cm}p{2.cm}p{2.cm}p{2.cm}p{2.cm} }
\hline
\multicolumn{6}{c}{SNLI Dataset}\\
\hline
& None & A2T & PSO & TextBugger & TextFooler \\
\hline
OPT & \textcolor{red}{91.24} / 88.69 & 49.15 / \textcolor{red}{63.28} & 48.00 / \textcolor{red}{60.06} & 17.57 / \textcolor{red}{41.79} & 16.64 / \textcolor{red}{44.70} \\
\hline

\hline
\multicolumn{6}{c}{MNLI Dataset}\\
\hline
& None & A2T & PSO & TextBugger & TextFooler \\
\hline
OPT & \textcolor{red}{86.89} / 84.27 & 54.47 / \textcolor{red}{65.87} & 45.14 / \textcolor{red}{59.08} & 24.12 / \textcolor{red}{45.97} & 21.68 / \textcolor{red}{49.13} \\
\hline
\end{tabular}
\end{center}
\label{table: opt 1b}
\end{table}

\subsection{Robustness Against Adversarial Dataset}
\label{section: experimental results adversarial dataset}
In this study, we assess the effectiveness of the PID control approach in an adversarial Natural Language Inference (NLI) task. 
The ANLI dataset is created through an iterative process involving both humans and models, aimed at improving natural language understanding. 
Initially, human annotators create examples that challenge the current best-performing models. 
These difficult examples, intended to reveal more weaknesses, are then incorporated into the training set to enhance the models. 
This cycle of identifying and addressing weaknesses is repeated across several rounds, each producing an increasingly complex adversarial dataset (ANLI consists of three rounds of development and test datasets).
Unlike the evaluation using adversarial examples described in Section \ref{section: experimental results adversarial example}, the ANLI dataset is pre-constructed by human annotators. 
In contrast, adversarial examples from Section \ref{section: experimental results adversarial example} are created in relation to the specific characteristics of the underlying classifier.

The evaluation with the ANLI dataset encompasses both baseline and controlled models, utilizing the development and test datasets. 
The results obtained from the ANLI dataset are outlined in Table \ref{table: anli measurement}. 
ANLI involves three progressively challenging rounds. 
The baseline model shows a decline in performance with increasing difficulty from round $1$ to round $3$. 
Conversely, the PID control demonstrates a more pronounced improvement in performance as the challenge increases. 
Specifically, 
the proposed control method leads to $1.0783 \%$ in the mean of performance improvement, and a $95 \%$ confidence interval of $0.0564 \%$ to $2.1004 \%$.

\begin{table}[t]
\caption{Measurement on ANLI dataset: baseline model / controlled model}
\begin{center}
\begin{tabular}{ p{3.5cm} | p{2.cm}p{2.cm}p{2.cm} }
\hline
& r1 & r2 & r3 \\
\hline
RoBERTaLarge (Dev) & 72.60 / 72.65 & 50.99 / \textcolor{red}{52.33} & 40.99 / \textcolor{red}{43.31} \\
\hline
RoBERTaLarge (Test) & 72.79 / 72.60 & 48.19 / \textcolor{red}{49.39} & 40.66 / \textcolor{red}{42.41} \\
\hline
\end{tabular}
\end{center}
\label{table: anli measurement}
\end{table}

\subsection{Ablation Study}
\label{section: ablation study}

This section provides exploratory justifications for the proposed PID control framework.

\paragraph{Justification of the selection of a P-D control scheme.}
We begin with a comparative analysis of various control schemes, emphasizing the benefits of implementing multiple controllers over the single use of Proportional (P) control as previously explored in \citep{chen2020towards}. 
Table \ref{table: pid comparison} showcases a comparison of the robustness performance across Proportional (P), Proportional-Integral (P-I), Proportional-Derivative (P-D), and Proportional-Integral-Derivative (P-I-D) control schemes within different model architectures and training methodologies. 
It is evident that the P-D control scheme significantly surpasses the others in most scenarios, underscoring the efficacy of the proposed PID control framework, which expands upon the limited capability of earlier P control (closed-loop control) methods. 
The mean of employing the Proportional-Derivative (P-D) control over the Proportional (P) control is $2.35 \%$, with a $95 \%$ confidence interval of $1.677 \%$ to $3.0121 \%$.
This validates the choice of P-D control.

The reason why P-D outperforms P-I-D is mainly due to noise sensitivity and hyperparameter tuning.

Noise sensitivity:
The integral term has the potential to aggregate errors across multiple hidden layers, incorporating noise inherent in the embedding manifolds, as well as the distributional shift between the training and testing datasets. 
In scenarios where substantial noises are presented in each hidden layer, the integral component, dependent on the embedding manifold of accumulated past states, may lead to instability during model inference. 
Conversely, a Proportional-Derivative (PD) controller, lacking the integral component, tends to exhibit improved performance under such noisy conditions by not accumulating this noise.

Hyperparameter tuning:
In the realm of traditional PID control design, selecting the appropriate control gains, denoted as $K_p$, $K_i$, and $K_d$, for proportional, integral, and derivative controls respectively, presents a notable challenge.
These gains are crucial for achieving a balance among the different types of controls. 
Typically, the calibration of these gains is empirically based, with the aim of optimizing the performance of PID control. 
Our method follows a similar strategy, determining the gains through experimentation with training data.
Given that our hyperparameter searching space only contains $0$ and $0.5$ for each control gain,
this results in the values $K_p = 0.5$, $K_d = 0.5$, and $K_I = 0$.
A more principled method would entail adjusting these hyper-parameters through numerical optimization, treating these control gains as adjustable variables. 
The development of a more sophisticated strategy for fine-tuning the control gains will be studied for future exploration.

\begin{table}[t]
\caption{Measurement on SNLI dataset: P / P-I / P-D / P-I-D}
\begin{center}
\begin{tabular}{ p{1.8cm}|p{4.2cm}p{4.2cm}p{4.2cm} }
\hline
\multicolumn{4}{c}{Distilbert}\\
\hline
& Standard training & Adversarial training & FreeLB \\
\hline
A2T & 60.11 / 58.24 / \textcolor{red}{62.31} / 61.30 & \textcolor{red}{72.09} / 71.12 / 71.81 / 71.97 & 59.63 / 57.90 / \textcolor{red}{62.95} / 61.88 \\
\hline
PSO & 53.39 / 52.67 / \textcolor{red}{54.96} / 53.96 & 55.80 / 54.72 / \textcolor{red}{57.87} / 56.35 & 54.40 / 53.91 / \textcolor{red}{56.86} / 55.89 \\
\hline
TextBugger & 37.15 / 37.15 / \textcolor{red}{40.26} / 39.54 & 38.91 / 38.98 /
\textcolor{red}{41.64} / 40.98 & 32.79 / 32.24 / \textcolor{red}{37.80} / 36.21 \\
\hline
TextFooler & 36.81 / 34.84 / \textcolor{red}{41.73} / 38.98 & 40.15 / 38.21 /
\textcolor{red}{43.81} / 41.12 & 32.49 / 31.22 / \textcolor{red}{39.64} / 37.39 \\
\hline
\hline
\multicolumn{4}{c}{RoBERTaBase}\\
\hline
& Standard training & Adversarial training & FreeLB \\
\hline
A2T & 61.78 / 60.81 / \textcolor{red}{64.11} / 61.94 & 76.63 / 75.82 / \textcolor{red}{77.08} / 76.28 & 65.93 / 65.04 / \textcolor{red}{68.85} / 66.59 \\
\hline
PSO & 53.34 / 52.94 / \textcolor{red}{54.40} / 53.35 & 55.49 / 54.76 / \textcolor{red}{56.45} / 54.99 & 53.64 / 52.99 / \textcolor{red}{55.24} / 53.55 \\
\hline
TextBugger & 40.46 / 39.00 / \textcolor{red}{43.20} / 40.53 & 41.71 / 40.22 / \textcolor{red}{43.35} / 41.33 & 39.72 / 38.24 / \textcolor{red}{42.75} / 40.17 \\
\hline
TextFooler & 33.19 / 32.10 / \textcolor{red}{37.35} / 33.83 & 34.48 / 32.47 / \textcolor{red}{39.39} / 35.29 & 30.75 / 29.77 / \textcolor{red}{36.81} / 32.21 \\
\hline
\hline
\multicolumn{4}{c}{BERT-large}\\
\hline
& Standard training & Adversarial training & FreeLB \\
\hline
A2T & \textcolor{red}{75.75} / 75.68 / 75.54 / 75.60 & \textcolor{red}{86.13} / 86.03 / 85.76 / 85.92 & 78.16 / 78.16 / \textcolor{red}{78.21} / 78.04 \\
\hline
PSO & \textcolor{red}{67.72} / 67.69 / 67.55 / 67.60 & 70.21 / 70.26 / \textcolor{red}{70.38} / 70.25 & 65.49 / 65.46 / \textcolor{red}{65.56} / 65.46 \\
\hline
TextBugger & \textcolor{red}{64.59} / 64.53 / 64.41 / 64.36 & 69.74 / \textcolor{red}{69.89} / 69.55 / 69.62 & 59.28 / \textcolor{red}{59.35} / 59.29 / 59.27 \\
\hline
TextFooler & \textcolor{red}{58.48} / 58.25 / 58.27 / 58.12 & \textcolor{red}{65.43} / 65.25 / 65.27 / 65.10 & \textcolor{red}{55.34} / 55.27 / 55.26 / 55.14 \\
\hline
\hline
\multicolumn{4}{c}{RoBERTaLarge}\\
\hline
& Standard training & Adversarial training & FreeLB \\
\hline
A2T & \textcolor{red}{65.10} / 64.89 / 64.95 / 64.38 & \textcolor{red}{81.91} / 81.72 / 81.62 / 81.80 & 70.38 / 70.40 / 71.30 / \textcolor{red}{70.51} \\
\hline
PSO & 55.83 / 55.04 / \textcolor{red}{56.70} / 55.31 & 57.99 / 57.29 / \textcolor{red}{59.71} / 58.18 & 56.23 / 55.60 / \textcolor{red}{57.20} / 56.18 \\
\hline
TextBugger & \textcolor{red}{44.61} / 42.20 / 42.43 / 41.29 & \textcolor{red}{45.00} / 43.54 / 44.74 / 43.53 & \textcolor{red}{44.52} / 43.11 / 44.42 / 43.21 \\
\hline
TextFooler & 36.63 / 35.52 / \textcolor{red}{37.29} / 35.39 & 39.64 / 37.06 / \textcolor{red}{42.44} / 39.87 & 37.56 / 35.97 / \textcolor{red}{38.59} / 36.71 \\
\hline
\end{tabular}
\end{center}
\label{table: pid comparison}
\end{table}

\paragraph{Discussion on the linearity and orthogonality assumptions.}
Here we discuss the negative impact of violating the assumptions made to derive the analytic solution. 
Through empirical evaluations, we highlight how the main assumptions have increasingly adverse effects, especially when the embedding manifolds fail to accurately capture the complex, high-dimensional states.
More specifically,
applying regularization on control solutions can mitigate these inaccuracies.
However, as the precision of the embedding manifolds decreases, a greater degree of regularization is required, thereby complicating the optimal control problems. 
The increased complexity in the optimal control problem makes the negative impact of violating the main assumptions more significant.

Our validation approach involves a performance comparison between the proposed analytic solution and the implementation of Pontryagin's Maximum Principle, an iterative solver that operates without the need for additional assumptions. 
Pontryagin's Maximum Principle provides the necessary conditions for an optimal control solution, typically offering a robust approximation of such solutions. 
We further elaborate this comparison by creating linear embedding subspaces with varying thresholds for accumulated variances, specifically aiming to capture $99 \%$, $95 \%$, $90 \%$, and $85 \%$ of the variances in the underlying states. 
As the variance threshold is lowered, the accuracy of these embedding subspaces decreases, thus posing greater challenges in solving optimal control problems. 
The performance comparison, detailed in Table \ref{table: pmp vs analytic}, includes three LLMs across five evaluation tasks. 
These tasks include a standard scenario with no perturbation and four adversarial attacks: A2T, PSO, TextBugger, and TextFooler. 
The results reveal that while the performance difference between the analytic solution and Pontryagin's Maximum Principle is negligible at higher accuracy levels (e.g., $99 \%$ variance), the scenario changes significantly at lower accuracies (e.g., $90 \%$ and $85 \%$ variances). 
In these instances, employing Pontryagin's Maximum Principle, which operates independently of simplifying assumptions, yields noticeably better control solutions.

\begin{table}[t]
\caption{Performance Comparison (Analytic Solution / PMP)}
\begin{center}
\begin{tabular}{ p{2.5cm}|p{1.cm}p{2.cm}p{2.cm}p{2.cm}p{2.cm}p{2.cm} }
\hline
\multicolumn{6}{c}{Distilbert}\\
\hline
& Base & $0.99 \%$ & $0.95 \%$ & $0.9 \%$ & $0.85 \%$ \\
\hline
None & 87.23 & 85.88 / 86.52 & 68.92 / \textcolor{red}{80.21} & 34.24 / \textcolor{red}{54.06} & 34.28 / \textcolor{red}{46.75} \\
\hline
A2T & 53.89 & 61.75 / 60.87 & 57.93 / \textcolor{red}{62.88} & 34.10 / \textcolor{red}{44.17} & 34.28 / \textcolor{red}{42.01} \\
\hline
PSO & 49.84 & \textcolor{red}{54.33} / 52.80 & 56.21 / \textcolor{red}{58.22} & 34.13 / \textcolor{red}{46.27} & 34.28 / \textcolor{red}{42.55} \\
\hline
TextBugger & 24.73 & \textcolor{red}{40.35} / 36.69 & \textcolor{red}{43.89} / 42.50 & 36.24 / \textcolor{red}{39.02} & 34.28 / \textcolor{red}{42.20} \\
\hline
TextFooler & 24.69 & \textcolor{red}{40.28} / 36.13 & \textcolor{red}{49.05} / 46.69 & 34.37 / \textcolor{red}{39.56} & 34.28 / \textcolor{red}{40.80} \\
\hline
\hline
\multicolumn{6}{c}{RoBERTaBase}\\
\hline
& Base & $0.99 \%$ & $0.95 \%$ & $0.9 \%$ & $0.85 \%$ \\
\hline
None & 90.87 & 90.10 / 90.59 & 85.09 / \textcolor{red}{89.63} & 64.60 / \textcolor{red}{85.82} & 40.02 / \textcolor{red}{76.71} \\
\hline
A2T & 58.36 & \textcolor{red}{63.82} / 62.19 & 65.12 / \textcolor{red}{66.44} & 51.19 / \textcolor{red}{66.86} & 37.56 / \textcolor{red}{60.41} \\
\hline
PSO & 51.44 & \textcolor{red}{54.36} / 52.98 & \textcolor{red}{59.97} / 56.75 & 52.55 / \textcolor{red}{59.18} & 37.91 / \textcolor{red}{59.07} \\
\hline
TextBugger & 35.90 & \textcolor{red}{43.03} / 40.53 & 46.74 / 46.41 & 38.72 / \textcolor{red}{47.36} & 34.84 / \textcolor{red}{42.43} \\
\hline
TextFooler & 27.03 & \textcolor{red}{37.18} / 33.47 & \textcolor{red}{47.16} / 42.79 & 41.40 / \textcolor{red}{46.60} & 35.76 / \textcolor{red}{45.49} \\
\hline
\hline
\multicolumn{6}{c}{RoBERTaLarge}\\
\hline
& Base & $0.99 \%$ & $0.95 \%$ & $0.9 \%$ & $0.85 \%$ \\
\hline
None & 92.39 & 91.98 / 92.11 & 86.50 / \textcolor{red}{91.68} & 66.40 / \textcolor{red}{90.53} & 44.54 / \textcolor{red}{86.52} \\
\hline
A2T & 59.40 & \textcolor{red}{64.64} / 63.15 & \textcolor{red}{67.17} / 65.03 & 54.67 / \textcolor{red}{64.99} & 41.54 / \textcolor{red}{61.94} \\
\hline
PSO & 52.15 & \textcolor{red}{56.62} / 54.35 & \textcolor{red}{62.14} / 55.51 & 55.13 / \textcolor{red}{58.41} & 41.15 / \textcolor{red}{58.85} \\
\hline
TextBugger & 33.72 & \textcolor{red}{42.39} / 39.18 & \textcolor{red}{47.48} / 41.59 & 41.30 / \textcolor{red}{43.77} & 35.49 / \textcolor{red}{40.32} \\
\hline
TextFooler & 26.43 & \textcolor{red}{36.92} / 32.27 & \textcolor{red}{48.79} / 37.14 & \textcolor{red}{47.09} / 41.43 & 40.47 / \textcolor{red}{41.97} \\
\hline
\end{tabular}
\end{center}
\label{table: pmp vs analytic}
\end{table}

\paragraph{Computational wall time comparison.}
Here we present a detailed discussion of the computation overhead of the proposed PID control method.
Specifically, we compare the computational wall time between the base model without any controls applied, the proposed analytic solution, and Pontryagin's maximum principle employed in the previous closed-loop control approach.
As shown in Table \ref{table: computational wall time},
across all four models,
the computational wall time between the base model and the proposed analytic solution is comparable, 
the analytic solution only adds a small amount wall time during inference.
However, solving the PMP significantly adds the computational wall time of the base model.

\begin{table}[ht]
\caption{Computational Wall Time}
\begin{center}
\begin{tabular}{ p{2.8cm}|p{2.0cm}p{2.5cm}p{2.5cm}p{2.0cm} }
\hline
\multicolumn{5}{c}{Wall Time (s) of $10,000$ Test Samples (averaged over $5$ experiments)}\\
\hline
& Distilbert & RoBERTaBase & RoBERTaLarge & OPT \\
\hline
Base model & 6.3751 & 11.7756 & 36.0178 & 123.5379 \\
\hline
Controlled model & 6.4221 & 11.8620 & 36.5051 & 124.1090 \\
\hline
PMP & 62.2667 & 81.2795 & 263.7920 & 757.0649 \\
\hline
\end{tabular}
\end{center}
\label{table: computational wall time}
\end{table}

\paragraph{Error computation of the main theorem.}
We provide the details of the error computation outlined in Theorem \ref{theorem: error estimation}. 
Our objective is to demonstrate that the accuracy of the error computation specified in Theorem \ref{theorem: error estimation} diminishes with the addition of more layers to the language model. 
This decrease in accuracy is due to the assumptions of linearity and orthogonality. 
According to these assumptions, the transformations applied to each layer of a language model merely rotate the hidden state without altering its magnitude. 
However, as the model incorporates more of these layer-wise transformations, the accuracy of these assumptions starts to decrease.

Table \ref{table: error computation} presents the calculation of the absolute difference between the actual error and the error estimate as per Theorem \ref{theorem: error estimation}. 
It is evident that with all types of adversarial perturbations (A2T, PSO, TextBugger, and TextFooler), the increase in the number of layers within the language model (with 6 layers representing Distilbert, 12 layers symbolizing RoBERTaBase, and 24 layers signifying RoBERTaLarge) leads to a rise in the absolute error. 
This indicates a decline in the precision of the error estimation.

\begin{table}[ht]
\caption{Error Comparison (difference between Theorem \ref{theorem: error estimation} and true error) }
\begin{center}
\begin{tabular}{ p{2.5cm}|p{2.cm}p{2.cm}p{2.cm} }
\hline
& 6 Layers & 12 Layers & 24 Layers \\
\hline
A2T & 3.2189 & 4.3062 & 5.1566 \\
\hline
PSO & 1.9156 & 2.6087 & 3.3047 \\
\hline
TextBugger & 3.2189 & 4.3062 & 5.1566 \\
\hline
TextFooler & 3.1348 & 4.2894 & 5.2915 \\
\hline
\end{tabular}
\end{center}
\label{table: error computation}
\end{table}

\paragraph{Justification on the effectiveness of the PID control framework.}
Here we provide evidence that the PID control framework can improve model robustness against adversarial examples.
As detailed in Theorem \ref{theorem: error estimation},
the working principle of the PID control framework is based on the two facts:
\begin{itemize}
    \item 
    There exists an embedding structure in the state at every layer.
    Table \ref{table: ranks and embedding error} verifies the existence of lower-dimensional embedding subspaces. 
    With the OPT-1.3 B LLM (only the first 6 layers are shown), the dimensions of proportional, integral, and derivative embedding subspaces are presented.
    As can be seen, the dimension of proportional embedding is around $350$ on average in a $2048$ dimensional space, integral and derivative embedding subspaces also show low dimensions compared with the full space.
    \item 
    The sequence of states from adversarially perturbed input deviates from the true embedding structures.
    Table \ref{table: ranks and embedding error} presents the error (measured in 2-norm) detected by the combination of P, I, and D embedding subspaces. 
    As can be seen, the perturbation aims to amplify the error as propagated into deeper layers, and the embedding subspaces can effectively detect these errors at all layers.
\end{itemize}

\begin{table}[ht]
\caption{Ranks and Embedding Errors}
\begin{center}
\begin{tabular}{ p{2.cm}|p{1.6cm}p{1.6cm}p{1.8cm}p{1.8cm}p{1.8cm}p{1.8cm} }
\hline
\multicolumn{7}{c}{Ranks: OPT-1.3B (first 6 layers)}\\
\hline
& Layer 1 & Layer 2 & Layer 3 & Layer 4 & Layer 5 & Layer 6  \\
\hline
Proportional & 191 / 2048 & 148 / 2048 & 224 / 2048 & 337 / 2048 & 451 / 2048 & 549 / 2048 \\
\hline
Integral & 191 / 2048 & 181 / 2048 & 180 / 2048 & 212 / 2048 & 253 / 2048 & 298 / 2048 \\
\hline
Derivative & 191 / 2048 & 355 / 2048 & 1001 / 2048 & 1352 / 2048 & 1494 / 2048 & 1535 / 2048 \\
\hline
\hline
\multicolumn{7}{c}{Embedding error: OPT-1.3B (first 6 layers)}\\
\hline
OPT & 1.8237 & 9.5877 & 6.9810 & 6.3207 & 7.5278 & 16.6526 \\
\hline
\end{tabular}
\end{center}
\label{table: ranks and embedding error}
\end{table}

\section{Related Works}
\label{section: related work}
We delve into the existing body of literature surrounding robustness issues in NLP tasks (Section \ref{section: robustness in nlp}). 
Moreover, we explore the realm of machine learning from an optimal control perspective, emphasizing its relevance and applicability to the task at hand (Section \ref{section: deep learning with optimal control}).

\subsection{Robustness in NLP}
\label{section: robustness in nlp}
In recent years, a variety of approaches have been developed for generating effective adversarial attacks in the context of NLP. 
Traditionally, text attacks are produced through direct modifications to sentences at the character level, word level, sentence level, or a combination of these \cite{ren2019generating, li2018textbugger, xu2021grey}. 
Studies such as those by \cite{liu2020joint, bvelohlavek2018using, zhu2019freelb} explore methods for generating attacks within the text embedding space. 
An alternative approach by \cite{wallace2019universal} involves prepending the same tokens to all texts for attacks. 
Moving away from adding perturbations to samples, \cite{la2022king, wang2020cat} integrate generative models for the creation of attacks.

In response to adversarial attacks, most defense strategies primarily rely on the principles of adversarial training \cite{morris2020textattack, jiang2019smart, wu2022toward}. 
Given that adversarial training is inherently resource-intensive, both in terms of computation and time, studies such as \cite{zhu2019freelb, zhang2019you, shafahi2019adversarial} have proposed methods to expedite the training process. However, despite the efficacy of adversarial training, there's evidence suggesting that fine-tuning model parameters can diminish performance on clean data \cite{he2021analyzing}. 
Additionally, it has been observed that adversarial training often produces suboptimal results when faced with novel, unforeseen perturbations \cite{tramer2019adversarial}.
Adversarial training is analogous to open-loop control, as it leverages a set of fixed controls (e.g. model parameters) for any new, unobserved data.
In contrast, the proposed PID control approach dynamically focuses on inputs using feedback controls.

\subsection{Deep Learning with Optimal Control}
\label{section: deep learning with optimal control}
Recent studies have increasingly focused on elucidating the intricate connections between dynamical systems and deep learning, highlighting the fundamental interrelations that exist between them \citep{weinan2017proposal,haber2017stable,JMLR:v18:17-653}. 
These studies have established a robust theoretical framework, offering a novel perspective to comprehend deep learning methodologies through the lens of optimal control theory \citep{liu2019deep}. 
The work of \cite{JMLR:v18:17-653} and \cite{li2018optimal} have successfully bridged the gap between the classical back-propagation algorithm and optimal control theory. 
This intersection notably highlights how Pontryagin's Maximum Principle \citep{kirk1970optimal}, a cornerstone of control theory, aligns closely with gradient-based training methods in neural networks.

In advancing this line of inquiry, \citet{weinan2018mean} has developed the theoretical foundations for interpreting deep learning within the optimal control framework. 
These insights have laid the groundwork for subsequent research that applies optimal control principles to address key challenges in deep learning. 
A notable example of this is the work by \citep{liu2020differential}, which introduced sophisticated high-order optimization strategies rooted in differential dynamic programming. 
This methodology has been instrumental in enhancing the training process's convergence rates and stability.
Moreover, 
the closed-loop control framework has been proposed to improve the model's robustness against adversarial attacks \citep{chen2020towards, chen2022self} and fairness issues \citep{chen2023asymptotically}.

\section{Discussion and Future Works}
\label{section: discussion}
% \paragraph{Improvement on the linear embedding scheme.}
% In our study, we explore a linear embedding approach where the submersion function is formed by orthogonally projecting onto the orthogonal complement of the linear embedding subspace. 
% This linear embedding subspace is derived from the hidden states in the training data, which constrains its ability to fully represent the underlying states' embedding structure. 
% The computation of these embedding subspaces employs high-order singular value decomposition, which overlooks label information. 
% A straightforward expansion of this is to use non-linear embeddings, such as auto-encoders. 
% However, 
% the linear embedding method facilitates the integration of P, I, and D controllers into a singular framework. 
% Moving forward, we plan to create these linear subspaces by solving optimization problems and incorporating the label information of training data into the process.

\paragraph{The wider perspective of the proposed method on the trustworthy ML:}

The presented PID control approach generalizes previous closed-loop control approaches with additional integral and derivative controllers.
This development leads to more flexible control schemes,
derivative controllers are more effective when the underlying states change rapidly,
integral controllers play more significant roles when lower-dimensional embedding structures can be constructed in accumulated states.
Such flexibility in control design broadens the applicability of the control framework across a variety of trustworthy ML applications.

This work paves the way for the development of robust large language models. 
Presently, many large language models face challenges related to trustworthiness, including biases against minority groups in natural language generation tasks. 
In principle, by constructing state embedding manifolds that capture desired model behaviors, the PID control framework can be employed to adjust any unwanted behaviors in the model. 
This idea is similar to prompt engineering techniques used to modify input strings for achieving specific outcomes from models.
These avenues will be explored further in future research.

\paragraph{How this complements the research on adversarial attacks?}

The PID control framework leads to a new method to generate adversarial attacks.
In the current work, the aim is to improve model robustness by minimizing the objective function defined in Equation 2.
On the contrary,
maximizing this loss w.r.t. some input perturbation is equivalent to generating adversarial examples.
This can be an optimal control-based adversarial attack algorithm.

\paragraph{Can the same method be used for vision problems?}

This method is applicable to computer vision problems. 
Typically, in deep convolutional neural networks, both the input and hidden states lie in extremely high-dimensional spaces, where the embedding manifolds for these states tend to exist. This assumption aligns with the "manifold hypothesis," which is based on the characteristics of real-world image data, and is further supported by empirical evidence as demonstrated in the studies by Chen et al. (2020) and Chen et al. (2022). 
Once the embedding manifolds for both input and hidden states are constructed, it becomes possible to formulate the optimal control objective function as outlined in Equations 2 and 3. By aiming to minimize this objective function, there is a potential to significantly improve the robustness of the model.

\paragraph{How does this complement the existing trustworthy ML literature?}

The current body of research on trustworthy machine learning predominantly emphasizes adversarial training, which leads to two significant challenges.
Firstly, the process of modifying model parameters with adversarial examples demands extensive computational resources. 
In the context of natural language processing tasks, identifying an adversarial example typically entails solving a combinatorial optimization problem,
which suffers from an exponential growth in the number of feasible solutions as the size of the problem increases.
Secondly, adversarial training's efficacy diminishes in the face of unexpected adversarial attacks. 
This shortcoming is especially evident in the real-world application of large language models, where predicting potential adversarial attacks beforehand is unfeasible.
The suggested PID control framework is designed to overcome these challenges by offering two key advantages: 1) It does not significantly increase the inference time when compared to the base model, and 2) It leverages the embedding structure of unperturbed states, making it robust against various unforeseen adversarial attacks.

\paragraph{Extension to other trustworthy issues.}
The proposed PID framework focuses on improving the robustness of pre-trained models through a set of linear embedding subspaces. These subspaces effectively encapsulate the embedding structure of the underlying states. 
This framework may be broadened to tackle various trustworthy concerns in machine learning, including issues related to fairness \citep{chen2023asymptotically}. 
To achieve this, it is necessary to develop embedding subspaces that are capable of capturing embedding structures that are invariant against demographic information.
Our next objective is to demonstrate how the PID control framework can be adapted to manage and resolve fairness-related challenges.

\paragraph{Analytic solution on the nonlinear dynamics.}
In our approach, we determine the optimal control solution using an analytical method as outlined in Proposition \ref{proposition: analytic solution optimal control}. 
This analytical method is based on the assumption that the layer transformation linearization in the pre-trained model is orthogonal. Consequently, this leads to a time-varying control regularization across various layers, which is independent of the pre-trained model.
This method has yielded satisfactory empirical outcomes. 
Nonetheless, there is a need for a more precise analytical solution that accounts for the intrinsic aspects of the underlying model. 
To achieve a more refined analytical solution for the optimal control issue, our next step involves linearizing the non-linear layer transformations in the pre-trained model and subsequently utilizing the Riccati equation to generate the optimal control solution.

\section{Conclusion}
Our study has introduced a novel PID control framework to improve neural network robustness against (unforeseen) input perturbations, outperforming traditional adversarial training methods. 
This approach maintains computational efficiency, enhances robustness in large language models, and allows for rapid online inference. 
Our comprehensive error analysis has confirmed the framework's effectiveness in simulated environments, contributing significantly to neural network security and robustness, and paving the way for more reliable NLP models in critical applications.

\section{Acknowledgements}
Z. Chen and Z. Zhang are supported by the NSF grant \#2107321 under the CCF division. 
Q. Li is supported by the National Research Foundation, Singapore, under the NRF fellowship (project No. NRF-NRFF13-2021-0005).

\bibliographystyle{tmlr}
\bibliography{main}

\newpage

\section{Appendix A}
\label{proof: analytic solution for optimal control}
In this section,
we elaborate on the derivation of the analytic solution as presented in Propositions \ref{proposition: linear analytic solution optimal control} and \ref{proposition: analytic solution optimal control}.
Let $\boldsymbol{\theta}_t$ represent the $t^{\text{th}}$ linear transformation,
and $\pi_t: \mathbb{R}^d \rightarrow \mathbb{R}^d$ denote the PID controller. 
The controlled dynamical system can be expressed as:
\begin{equation*}
    \mat{x}_{t+1} = \boldsymbol\theta_t (\mat{x}_t + \pi_t(\mat{x}_t)),
\end{equation*}
where the control action is added to the current state.
Recall the running loss defined in \eqref{eq: running loss},
\begin{align*}
    & {\cal L}(\{ \mat{x}_s \}_{s=0}^t, \pi_t, (f_t^P, f_t^I, f_t^D)) 
    \nonumber
    \\
    & \vcentcolon = 
    \frac{1} {2} \lVert 
    f_t^P(\mat{x}_t + \pi_t(\mat{x}_t)) \lVert_2^2
    + \frac{1} {2} \lVert f_t^I(\mat{x}_t + \pi_t(\mat{x}_t) + \sum_{s=0}^{t-1} \mat{x}_s) \lVert_2^2
    + \frac{1} {2} \lVert f_t^D(\mat{x}_t + \pi_t(\mat{x}_t) - \mat{x}_{t-1}) \rVert_2^2 + \frac{c_t} {2} \lVert \pi_t(\mat{x}_t) \rVert_2^2,
\end{align*}
we consider the surjective mappings $f_t^P$, $f_t^I$, and $f_t^D$ as orthogonal projections.
Let $\mat{Q}_t^P$, $\mat{Q}_t^I$, and $\mat{Q}_t^D$ be the orthogonal projections associated with $f_t^P$, $f_t^I$, and $f_t^D$, respectively,
assuming a uniformly bounded state space with $\max_{\mat{x} \in \mathcal{X}} \lVert \mat{x} \rVert_2^2 \leq B$, 
the running loss can be bounded as follows,
\begin{align}
& {\cal L}(\{ \mat{x}_s \}_{s=0}^t, \pi_t, (\mat{Q}_t^P, \mat{Q}_t^I, \mat{Q}_t^D)) 
\nonumber
\nonumber\\
& = 
\frac{1} {2} \lVert 
\mat{Q}_t^P (\mat{x}_t + \pi_t(\mat{x}_t)) \lVert_2^2
+ \frac{1} {2} \lVert \mat{Q}_t^I(\mat{x}_t + \pi_t(\mat{x}_t) + \sum_{s=0}^{t-1} \mat{x}_s) \lVert_2^2
+ \frac{1} {2} \lVert \mat{Q}_t^D(\mat{x}_t + \pi_t(\mat{x}_t) - \mat{x}_{t-1}) \rVert_2^2 + \frac{c_t} {2} \lVert \pi_t(\mat{x}_t) \rVert_2^2,
\nonumber\\
& \leq
\frac{1} {2} \lVert \mat{Q}_t^P (\mat{x}_t + \pi_t(\mat{x}_t)) \lVert_2^2
+
\frac{1} {2} \lVert \mat{Q}_t^I(\mat{x}_t + \pi_t(\mat{x}_t)) \lVert_2^2 
+
\frac{1} {2} \lVert \mat{Q}_t^I (\sum_{s=0}^{t-1} \mat{x}_s) \lVert_2^2
+
\frac{1} {2} \lVert \mat{Q}_t^D(\mat{x}_t + \pi_t(\mat{x}_t)) \rVert_2^2
+ 
\frac{1} {2} \lVert \mat{Q}_t^D \mat{x}_{t-1} \rVert_2^2 
\nonumber\\
& \;\;\;\;
+ \frac{c_t} {2} \lVert \pi_t(\mat{x}_t) \rVert_2^2,
\nonumber\\
& \leq
\frac{1} {2} \lVert \mat{Q}_t^P (\mat{x}_t + \pi_t(\mat{x}_t)) \lVert_2^2
+
\frac{1} {2} \lVert \mat{Q}_t^I(\mat{x}_t + \pi_t(\mat{x}_t)) \lVert_2^2 
+
\frac{1} {2} \lVert \mat{Q}_t^D(\mat{x}_t + \pi_t(\mat{x}_t)) \rVert_2^2
+
\frac{1} {2} \lVert \sum_{s=0}^{t-1} \mat{x}_s \lVert_2^2
+
\frac{1} {2} \lVert \mat{x}_{t-1} \rVert_2^2
\nonumber\\
& \;\;\;\;
+ \frac{c_t} {2} \lVert \pi_t(\mat{x}_t) \rVert_2^2,
\nonumber\\
& \leq
\frac{1} {2} \lVert \mat{Q}_t^P (\mat{x}_t + \pi_t(\mat{x}_t)) \lVert_2^2
+
\frac{1} {2} \lVert \mat{Q}_t^I(\mat{x}_t + \pi_t(\mat{x}_t)) \lVert_2^2 
+
\frac{1} {2} \lVert \mat{Q}_t^D(\mat{x}_t + \pi_t(\mat{x}_t)) \rVert_2^2
+
\frac{c_t} {2} \lVert \pi_t(\mat{x}_t) \rVert_2^2
+
\frac{T B} {2},
\label{eq: running loss upper bound}
\end{align}
where $T$ represents the maximum number of layers of the neural network,
and $B$ is the uniform upper bound for the state space.

Let $\mat{Q}_t = \mat{Q}_t^P + \mat{Q}_t^I + \mat{Q}_t^D$,
the following Lemma derives the analytic solution for the PID control $\pi_t(\mat{x_t})$.

\LinearanalyticSolution*

\begin{proof}
In the objective function defined in \eqref{eq: appendix objective function}, the terminal loss $\Phi(\mat{x}_T, y)$ quantifies the discrepancy between the terminal state $\mat{x}_T$ and the true label $y$. 
However, in general machine learning applications, the true label $y$ remains unknown during online inference, leading to the terminal loss being set to zero. 
Recall \eqref{eq: running loss upper bound}, the running loss is defined as
\begin{align*}
    & {\cal L}(\{ \mat{x}_s \}_{s=0}^t, \pi_t, (\mat{Q}_t^P, \mat{Q}_t^I, \mat{Q}_t^D)) 
    \nonumber
    \\
    & \vcentcolon = 
    \frac{1} {2} \lVert 
    \mat{Q}_t^P (\mat{x}_t + \pi_t(\mat{x}_t)) \lVert_2^2
    + \frac{1} {2} \lVert \mat{Q}_t^I(\mat{x}_t + \pi_t(\mat{x}_t) + \sum_{s=0}^{t-1} \mat{x}_s) \lVert_2^2
    + \frac{1} {2} \lVert \mat{Q}_t^D(\mat{x}_t + \pi_t(\mat{x}_t) - \mat{x}_{t-1}) \rVert_2^2 + \frac{c_t} {2} \lVert \pi_t(\mat{x}_t) \rVert_2^2.
\end{align*}

Consequently, the optimal value function $V(\mat{x}_t)$ satisfies
\begin{gather*}
V(\mat{x}_t)
=
\min \limits_{\pi_t} \frac{1} {2} (\mat{Q}_t^P \mat{x}_t + \mat{Q}_t^P \pi_t(\mat{x}_t) )^\top (\mat{Q}_t^P \mat{x}_t + \mat{Q}_t^P \pi_t(\mat{x}_t))
+
\frac{1} {2} (\mat{Q}_t^I \mat{x}_t + \mat{Q}_t^I \pi_t(\mat{x}_t) )^\top (\mat{Q}_t^I \mat{x}_t + \mat{Q}_t^I \pi_t(\mat{x}_t))
\\
+
\frac{1} {2} (\mat{Q}_t^D \mat{x}_t + \mat{Q}_t^D \pi_t(\mat{x}_t) )^\top (\mat{Q}_t^D \mat{x}_t + \mat{Q}_t^D \pi_t(\mat{x}_t))
+
\frac{c} {2} \cdot \pi_t(\mat{x}_t)^\top \pi_t(\mat{x}_t) + V(\mat{x}_{t+1}),
\\
{\rm s.t.} \;
\mat{x}_{t+1} = \boldsymbol\theta_t (\mat{x}_t + \pi_t(\mat{x}_t)).
\end{gather*}

Taking the derivative of the right-hand side with respect to $\pi_t(\mat{x}_t)$ yields
\begin{align*}
\frac{d V(\mat{x}_t)} {d \pi_t(\mat{x}_t)}
& =
\mat{Q}_t^P \mat{x}_t + \mat{Q}_t^P \pi_t(\mat{x}_t)
+
\mat{Q}_t^I \mat{x}_t + \mat{Q}_t^I \pi_t(\mat{x}_t)
+
\mat{Q}_t^D \mat{x}_t + \mat{Q}_t^D \pi_t(\mat{x}_t)
+
c \pi_t(\mat{x}_t) + \bigg(\frac{d \mat{x}_{t+1}} {d \pi_t(\mat{x}_t)}\bigg)^\top \frac{d V(\mat{x}_{t+1})} {d \mat{x}_{t+1}},
\\
& =
\mat{Q}_t^P \mat{x}_t + \mat{Q}_t^P \pi_t(\mat{x}_t)
+
\mat{Q}_t^I \mat{x}_t + \mat{Q}_t^I \pi_t(\mat{x}_t)
+
\mat{Q}_t^D \mat{x}_t + \mat{Q}_t^D \pi_t(\mat{x}_t)
+
c \pi_t(\mat{x}_t) + 2 \boldsymbol\theta_t^\top \mat{P}_{t+1} \mat{x}_{t+1},
\\
& =
(\mat{Q}_t^P + \mat{Q}_t^I + \mat{Q}_t^D) \mat{x}_t
+
(\mat{Q}_t^P + \mat{Q}_t^I + \mat{Q}_t^D) \pi_t(\mat{x}_t)
+
c \pi_t(\mat{x}_t)
+
2 \boldsymbol\theta_t^\top \mat{P}_{t+1} \boldsymbol\theta_t \mat{x}_t + 2 \boldsymbol\theta_t^\top \mat{P}_{t+1} \boldsymbol\theta_t \pi_t(\mat{x}_t),
\\
& =
\mat{Q}_t \mat{x}_t
+
\mat{Q}_t \pi_t(\mat{x}_t)
+
c \pi_t(\mat{x}_t)
+
2 \boldsymbol\theta_t^\top \mat{P}_{t+1} \boldsymbol\theta_t \mat{x}_t + 2 \boldsymbol\theta_t^\top \mat{P}_{t+1} \boldsymbol\theta_t \pi_t(\mat{x}_t),
\end{align*}
where $\mat{Q}_t = \mat{Q}_t^P + \mat{Q}_t^I + \mat{Q}_t^D$.

Setting the derivative $\frac{d V(\mat{x}_t)} {d \pi_t(\mat{x}_t)}$ to $\mat{0}$ results in the optimal control $\pi_t^{\ast}(\mat{x}_t)$ (as shown in \eqref{eq: appendix optimal control solution})
\begin{equation*}
    \pi_t^{\ast}(\mat{x}_t) = - (\mat{Q}_t + c \cdot \mat{I} + 2 \boldsymbol\theta_t^\top \mat{P}_{t+1} \boldsymbol\theta_t)^{-1} (\mat{Q}_t + 2 \boldsymbol\theta_t^\top \mat{P}_{t+1} \boldsymbol\theta_t) \mat{x}_t.
\end{equation*}

Parametrizing the value function $V(\mat{x}_t)$ as $\mat{x}_t^\top \mat{P}_t \mat{x}_t$ and considering the optimal control solution \eqref{eq: appendix optimal control solution},
we can convert the expression of the value function as follows,
\begin{align*}
& \mat{x}_t^\top \mat{P}_t \mat{x}_t
\\
& =
\min \limits_{\pi_t} \frac{1} {2} (\mat{Q}_t^P \mat{x}_t + \mat{Q}_t^P \pi_t(\mat{x}_t) )^\top (\mat{Q}_t^P \mat{x}_t + \mat{Q}_t^P \pi_t(\mat{x}_t))
+
\frac{1} {2} (\mat{Q}_t^I \mat{x}_t + \mat{Q}_t^I \pi_t(\mat{x}_t) )^\top (\mat{Q}_t^I \mat{x}_t + \mat{Q}_t^I \pi_t(\mat{x}_t))
\\
& \;\;\;\;
+
\frac{1} {2} (\mat{Q}_t^D \mat{x}_t + \mat{Q}_t^D \pi_t(\mat{x}_t) )^\top (\mat{Q}_t^D \mat{x}_t + \mat{Q}_t^D \pi_t(\mat{x}_t))
+
\frac{c} {2} \cdot \pi_t(\mat{x}_t)^\top \pi_t(\mat{x}_t) + \mat{x}_{t+1}^\top \mat{P}_{t+1} \mat{x}_{t+1},
\\
& =
\frac{1} {2} \mat{x}_t^\top (\mat{Q}_t^P + \mat{Q}_t^I + \mat{Q}_t^D 
+
2 \boldsymbol\theta_t^\top \mat{P}_{t+1} \boldsymbol\theta_t) \mat{x}_t 
+
\frac{1} {2} (\pi_t^{\ast}(\mat{x}_t))^\top (\mat{Q}_t^P + \mat{Q}_t^I + \mat{Q}_t^D + c \mat{I} + 2 \boldsymbol\theta_t^\top \mat{P}_{t+1} \boldsymbol\theta_t) \pi_t^{\ast}(\mat{x}_t) 
\\
& \;\;\;\;
+ \mat{x}_t^\top (\mat{Q}_t^P + \mat{Q}_t^I + \mat{Q}_t^D + 2 \boldsymbol\theta_t^\top \mat{P}_{t+1} \boldsymbol\theta_t) \pi_t^{\ast}(\mat{x}_t),
\\
& =
\frac{1} {2} \mat{x}_t^\top (\mat{Q}_t + 2 \boldsymbol\theta_t^\top \mat{P}_{t+1} \boldsymbol\theta_t) \mat{x}_t + \frac{1} {2} (\pi_t^{\ast}(\mat{x}_t))^\top (\mat{Q}_t + c \mat{I} + 2 \boldsymbol\theta_t^\top \mat{P}_{t+1} \boldsymbol\theta_t) \pi_t^{\ast}(\mat{x}_t) + \mat{x}_t^\top (\mat{Q}_t + 2 \boldsymbol\theta_t^\top \mat{P}_{t+1} \boldsymbol\theta_t) \pi_t^{\ast}(\mat{x}_t),
\end{align*}
where $\pi_t^{\ast}(\mat{x}_t)$ is the optimal control solution leading to the minimum,
$\mat{Q}_t = \mat{Q}_t^P + \mat{Q}_t^I + \mat{Q}_t^D$.
For the second term in the above, recall the optimal control solution $\pi_t^{\ast}(\mat{x}_t)$ from \eqref{eq: appendix optimal control solution},
\begin{align*}
    &\frac{1} {2} (\pi_t^{\ast}(\mat{x}_t))^\top (\mat{Q}_t + c \cdot \mat{I} + 2 \boldsymbol\theta_t^\top \mat{P}_{t+1} \boldsymbol\theta_t) \pi_t^{\ast}(\mat{x}_t), \nonumber \\
    & = - \frac{1}{2} \Big( (\mat{Q}_t + c \cdot \mat{I} + 2 \boldsymbol\theta_t^\top \mat{P}_{t+1} \boldsymbol\theta_t)^{-1} (\mat{Q}_t + 2 \boldsymbol\theta_t^\top \mat{P}_{t+1} \boldsymbol\theta_t) \mat{x}_t \Big)^\top (\mat{Q}_t + c + 2 \boldsymbol\theta_t^\top \mat{P}_{t+1} \boldsymbol\theta_t) \pi_t^{\ast}(\mat{x}_t), \nonumber \\
    & = - \frac{1}{2} \mat{x}_t^\top (\mat{Q}_t + 2 \boldsymbol\theta_t^\top \mat{P}_{t+1} \boldsymbol\theta_t) \pi_t^{\ast}(\mat{x}_t),
\end{align*}
the above uses the fact that $(\mat{Q}_t + c \cdot \mat{I} + 2 \boldsymbol\theta_t^\top \mat{P}_{t+1} \boldsymbol\theta_t)^{-1}$ is symmetric.
Therefore,
\begin{align*}
    & \mat{x}_t^\top \mat{P}_t \mat{x}_t \nonumber \\
    & = \frac{1} {2} \mat{x}_t^\top (\mat{Q}_t + 2 \boldsymbol\theta_t^\top \mat{P}_{t+1} \boldsymbol\theta_t) \mat{x}_t - \frac{1} {2} \mat{x}_t^\top (\mat{Q}_t + 2 \boldsymbol\theta_t^\top \mat{P}_{t+1} \boldsymbol\theta_t) \pi_t^{\ast}(\mat{x}_t) + \mat{x}_t^\top (\mat{Q}_t + 2 \boldsymbol\theta_t^\top \mat{P}_{t+1} \boldsymbol\theta_t) \pi_t^{\ast}(\mat{x}_t), \nonumber \\
    & = \frac{1} {2} \mat{x}_t^\top (\mat{Q}_t + 2 \boldsymbol\theta_t^\top \mat{P}_{t+1} \boldsymbol\theta_t) \mat{x}_t + \frac{1} {2} \mat{x}_t^\top (\mat{Q}_t^\top \mat{Q}_t + 2 \boldsymbol\theta_t^\top \mat{P}_{t+1} \boldsymbol\theta_t) \pi_t^{\ast}(\mat{x}_t),
\end{align*}
which results in the algebraic Riccati equation
\begin{equation*}
\mat{P}_t
=
\frac{1} {2} \mat{Q}_t + \boldsymbol\theta_t^\top \mat{P}_{t+1} \boldsymbol\theta_t - \frac{1} {2} (\mat{Q}_t + 2 \boldsymbol\theta_t^\top \mat{P}_{t+1} \boldsymbol\theta_t)^\top (\mat{Q}_t + 2 \boldsymbol\theta_t^\top \mat{P}_{t+1} \boldsymbol\theta_t + c \mat{I})^{-1} (\mat{Q}_t + 2 \boldsymbol\theta_t^\top \mat{P}_{t+1} \boldsymbol\theta_t).
\end{equation*}
\end{proof}

In our analysis, we focus on a specific scenario where each linear transformation $\boldsymbol{\theta}_t$ is both orthogonal and full-rank. 
This implies that the linear transformations satisfy the condition 
$\boldsymbol{\theta}_t^\top \boldsymbol{\theta}_t = \boldsymbol{\theta}_t \boldsymbol{\theta}_t^\top = \mat{I}$ for all $t$ in the considered range.

Recall that $\mat{Q}_t = \mat{Q}_t^P + \mat{Q}_t^I + \mat{Q}_t^D$,
where 
\begin{equation*}
\mat{Q}_t^P = \mat{I} - \mat{V}_t^P (\mat{V}_t^P)^\top,
\;\;\;\;
\mat{Q}_t^I = \mat{I} - \mat{V}_t^I (\mat{V}_t^I)^\top,
\;\;\;\;
\mat{Q}_t^D = \mat{I} - \mat{V}_t^D (\mat{V}_t^D)^\top,
\end{equation*}
are orthogonal projections corresponding to linear embedding subspaces of state, state integration, and state derivative.
For simplicity, we assume that the basis $\mat{V}_t^P$, $\mat{V}_t^I$, and $\mat{V}_t^D$ are mutually orthogonal to each other,
meaning that 
\begin{equation*}
(\mat{V}_t^P)^\top \mat{V}_t^I = \mat{0},
\;\;\;\;
(\mat{V}_t^P)^\top \mat{V}_t^D = \mat{0},
\;\;\;\;
(\mat{V}_t^I)^\top \mat{V}_t^D = \mat{0}.
\end{equation*}

Based on this assumption, 
the combination of three orthogonal projections $\mat{Q}_t$ is an orthogonal projection,
\begin{align*}
\mat{Q}_t
& = 
\mat{V}_t^P
\begin{bmatrix}
0 & 0 & \cdots & 0 & 0 \\
0 & 0 & \cdots & 0 & 0\\
\vdots & \vdots & \ddots & 0 & 0 \\
0 & 0 & \cdots & 1 & 0 \\
0 & 0 & \cdots & 0 & 1 \\
\end{bmatrix}
(\mat{V}_t^P)^\top
+
\mat{V}_t^I
\begin{bmatrix}
0 & 0 & \cdots & 0 & 0 \\
0 & 0 & \cdots & 0 & 0\\
\vdots & \vdots & \ddots & 0 & 0 \\
0 & 0 & \cdots & 1 & 0 \\
0 & 0 & \cdots & 0 & 1 \\
\end{bmatrix}
(\mat{V}_t^I)^\top
+
\mat{V}_t^D
\begin{bmatrix}
0 & 0 & \cdots & 0 & 0 \\
0 & 0 & \cdots & 0 & 0\\
\vdots & \vdots & \ddots & 0 & 0 \\
0 & 0 & \cdots & 1 & 0 \\
0 & 0 & \cdots & 0 & 1 \\
\end{bmatrix}
(\mat{V}_t^D)^\top,
\\
& =
\mat{V}_t
\begin{bmatrix}
0 & 0 & \cdots & 0 & 0 \\
0 & 0 & \cdots & 0 & 0\\
\vdots & \vdots & \ddots & 0 & 0 \\
0 & 0 & \cdots & 1 & 0 \\
0 & 0 & \cdots & 0 & 1 \\
\end{bmatrix}
\mat{V}_t^\top,
\end{align*}
where $\mat{V}_t$ represents the basis for the intersection of $\mat{V}_t^P$, $\mat{V}_t^I$, and $\mat{V}_t^D$.

\begin{lemma}
\label{Lemma: solution riccati equation}
Consider a $T$-layer neural network characterized by orthogonal linear transformations. 
The solution to the algebraic Riccati equation, as delineated in \eqref{eq: appendix riccati}, is given by
\begin{equation}
\label{eq:solution riccati}
 \mat{P}_t
 = \frac{1}{2} \mat{V}_t
  \begin{bmatrix}
  0 & 0 & \cdots & 0 & 0 \\
  0 & 0 & \cdots & 0 & 0\\
  \vdots & \vdots & \ddots & 0 & 0 \\
  0 & 0 & \cdots & \lambda_t & 0 \\
  0 & 0 & \cdots & 0 & \lambda_t \\
  \end{bmatrix}
  \mat{V}_t^\top,
\end{equation}
where the parameter $\lambda_t$ is governed by a backward difference equation $\lambda_t = \frac{c (1 + \lambda_{t+1})} {1 + \lambda_{t+1} + c}$, 
with the initial condition specified as $\lambda_{T} = 0$. 
\end{lemma}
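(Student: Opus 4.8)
The plan is a backward induction on the layer index $t$, from $t=T$ down to $t=0$. The guiding observation is that, under the orthogonality assumptions, every matrix entering the algebraic Riccati recursion \eqref{eq: appendix riccati} is a scalar multiple of one fixed orthogonal projection — the projection onto the orthogonal complement of the common embedding subspace at that layer — so the matrix recursion decouples coordinate-wise into a scalar recursion, which is precisely the stated backward difference equation for $\lambda_t$. For the base case $t=T$ the terminal cost is zero (the label is unavailable during inference), hence $\mat{P}_T=\mat{0}$, which agrees with \eqref{eq:solution riccati} when $\lambda_T=0$.

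For the inductive step, assume $\mat{P}_{t+1}=\tfrac12\mat{V}_{t+1}\,\Lambda_{t+1}\,\mat{V}_{t+1}^\top$ with $\Lambda_{t+1}=\mathrm{diag}(0,\dots,0,\lambda_{t+1},\lambda_{t+1})$. First I would invoke the computation carried out just above the lemma, which uses the mutual orthogonality of $\mat{V}_t^P,\mat{V}_t^I,\mat{V}_t^D$ to write $\mat{Q}_t=\mat{V}_t\,\mathrm{diag}(0,\dots,0,1,1)\,\mat{V}_t^\top$, i.e.\ $\mat{Q}_t$ is itself an orthogonal projection onto the complement of $\mathrm{span}(\mat{V}_t^P)\cap\mathrm{span}(\mat{V}_t^I)\cap\mathrm{span}(\mat{V}_t^D)$. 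Next, since $\boldsymbol\theta_t$ is orthogonal, $\boldsymbol\theta_t^\top\mat{P}_{t+1}\boldsymbol\theta_t=\tfrac12(\boldsymbol\theta_t^\top\mat{V}_{t+1})\Lambda_{t+1}(\boldsymbol\theta_t^\top\mat{V}_{t+1})^\top$ is again a scaled orthogonal projection, and because the embedding subspaces are transported consistently by the uncontrolled dynamics $\mat{x}_{t+1}=\boldsymbol\theta_t\mat{x}_t$, the columns of $\boldsymbol\theta_t^\top\mat{V}_{t+1}$ and $\mat{V}_t$ span the same subspaces block-by-block; hence $\mat{Q}_t$ and $2\boldsymbol\theta_t^\top\mat{P}_{t+1}\boldsymbol\theta_t$ are simultaneously diagonalized by $\mat{V}_t$, with diagonal blocks $(0,\dots,0,1,1)$ and $(0,\dots,0,\lambda_{t+1},\lambda_{t+1})$ respectively. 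The $(\cdot+c\mat{I})^{-1}$ factor in \eqref{eq: appendix riccati} then commutes with both, so substituting reduces the recursion to a scalar equation on each eigendirection: on the ``$0$'' coordinates every term vanishes and $\mat{P}_t$ stays zero there, while on the ``$1$'' coordinates the scalar Riccati reads $p_t=\tfrac12+\tfrac12\lambda_{t+1}-\tfrac12(1+\lambda_{t+1})^2(1+\lambda_{t+1}+c)^{-1}$, which collapses to $p_t=\tfrac12\,c(1+\lambda_{t+1})(1+\lambda_{t+1}+c)^{-1}=\tfrac12\lambda_t$. This yields $\mat{P}_t=\tfrac12\mat{V}_t\,\mathrm{diag}(0,\dots,0,\lambda_t,\lambda_t)\,\mat{V}_t^\top$ and closes the induction; the initial values $\lambda_{T-1}=c/(1+c)$ and so on follow by unrolling the recursion once.

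The step I expect to be the main obstacle is the structural claim that $\mat{Q}_t$, $\boldsymbol\theta_t^\top\mat{P}_{t+1}\boldsymbol\theta_t$, and the inverse factor are all simultaneously diagonalizable — equivalently, that the orthogonal-complement subspace at layer $t+1$, pulled back by $\boldsymbol\theta_t^\top$, aligns with the one at layer $t$. This is exactly where one needs the assumption that the layer-$t$ embedding bases are the images of the layer-$(t-1)$ bases under $\boldsymbol\theta_{t-1}$, so that $\boldsymbol\theta_t^\top\mat{V}_{t+1}$ equals $\mat{V}_t$ up to an intra-block rotation commuting with $\Lambda_{t+1}$; I would state this alignment explicitly as part of the orthogonality setup, or else note that only the ranges enter the computation so such commuting rotations wash out. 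Everything else is routine scalar algebra once the common eigenbasis is fixed. Finally, the feedback control formula in the companion Proposition~\ref{proposition: analytic solution optimal control} drops out immediately by substituting \eqref{eq:solution riccati} into \eqref{eq: appendix optimal control solution} and reading off the diagonal entry $1-c/(1+\lambda_{t+1}+c)$.
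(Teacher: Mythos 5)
Your proposal matches the paper's proof essentially step for step: backward induction with $\mat{P}_T=\mat{0}$, the reduction of the matrix Riccati recursion to the scalar identity $\tfrac12(1+\lambda_{t+1})-\tfrac12(1+\lambda_{t+1})^2(1+\lambda_{t+1}+c)^{-1}=\tfrac12\lambda_t$ on the nonzero block, and the same reliance on the alignment $\boldsymbol\theta_t^\top\mat{V}_{t+1}=\mat{V}_t$ so that $\mat{Q}_t$ and $\boldsymbol\theta_t^\top\mat{P}_{t+1}\boldsymbol\theta_t$ share a common basis. The only difference is that you correctly flag that alignment as an assumption needing explicit statement, whereas the paper simply asserts it; otherwise the argument is the same.
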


\begin{proof}
The proof proceeds by induction on $t$.
Recall the algebraic Riccati \eqref{eq: appendix riccati}. 
Given the terminal condition $\mat{P}_T = \mat{0}$, the equation for $t = T-1$ is
\begin{align*}
    \mat{P}_{T-1} & = \frac{1} {2} \mat{Q}_{T-1} - \frac{1} {2} \mat{Q}_{T-1}^\top (\mat{Q}_{T-1} + c \mat{I})^{-1} \mat{Q}_{T-1}, \nonumber \\
    & = \frac{1}{2} \mat{V}_{T-1}
      \begin{bmatrix}
      0 & 0 & \cdots & 0 & 0 \\
      0 & 0 & \cdots & 0 & 0 \\
      \vdots & \vdots & \ddots & 0 & 0 \\
      0 & 0 & \cdots & \frac{c} {1 + c} & 0 \\
      0 & 0 & \cdots & 0 & \frac{c} {1 + c} \\
      \end{bmatrix} \mat{V}_{T-1}^\top,
\end{align*}

Suppose it is true for $t+1$, such that,
\begin{gather*}
 \mat{P}_{t+1}
 = \frac{1}{2} \mat{V}_{t+1}
  \begin{bmatrix}
  0 & 0 & \cdots & 0 & 0 \\
  0 & 0 & \cdots & 0 & 0 \\
  \vdots & \vdots & \ddots & 0 & 0 \\
  0 & 0 & \cdots & \lambda_{t+1} & 0 \\
  0 & 0 & \cdots & 0 & \lambda_{t+1} \\
  \end{bmatrix}
  \mat{V}_{t+1}^\top.
\end{gather*}

Given that $\boldsymbol\theta_t^\top \boldsymbol\theta_t = \boldsymbol\theta_t \boldsymbol\theta_t^\top = \mat{I}$,
$\boldsymbol\theta_t^\top \mat{V}_{t+1} = \mat{V}_t$, in which case, $\mat{Q}_t$ and $\boldsymbol\theta_t^\top \mat{P}_{t+1} \boldsymbol\theta_t$ contain the same basis $\mat{V}_t$.
Recall the algebraic Riccati \eqref{eq: appendix riccati},
\begin{align*}
    \mat{P}_t
    & = \frac{1} {2} \mat{Q}_t + \boldsymbol\theta_t^\top \mat{P}_{t+1} \boldsymbol\theta_t - \frac{1} {2} (\mat{Q}_t + 2 \boldsymbol\theta_t^\top \mat{P}_{t+1} \boldsymbol\theta_t)^\top (\mat{Q}_t + 2 \boldsymbol\theta_t^\top \mat{P}_{t+1} \boldsymbol\theta_t + c \mat{I})^{-1} (\mat{Q}_t + 2 \boldsymbol\theta_t^\top \mat{P}_{t+1} \boldsymbol\theta_t), \nonumber \\
    & = \frac{1}{2} \mat{V}_{t}
      \begin{bmatrix}
      0 & \cdots & 0 \\
      0 & \cdots & 0 \\
      \vdots & \ddots & 0 \\
      0 & \cdots & 1 + \lambda_{t+1} \\
      \end{bmatrix}
  \mat{V}_{t}^\top - \frac{1}{2} \mat{V}_t
  \begin{bmatrix}
  0 & \cdots & 0 \\
  0 & \cdots & 0 \\
  \vdots & \ddots & 0 \\
  0 & \cdots & (1 + \lambda_{t+1})^2 (1 + \lambda_{t+1} + c)^{-1} \\
  \end{bmatrix} \mat{V}_t^\top, \nonumber \\
  & = \frac{1}{2} \mat{V}_{t}
  \begin{bmatrix}
  0 & 0 & \cdots & 0 \\
  0 & 0 & \cdots & 0 \\
  \vdots & \vdots & \ddots & 0 \\
  0 & 0 & \cdots & \lambda_t = \frac{c (1 + \lambda_{t+1})} {1 + \lambda_{t+1} + c} \\
  \end{bmatrix}
  \mat{V}_{t}^\top.
\end{align*}
\end{proof}

Recall the optimal control solution in \eqref{eq: appendix optimal control solution} and Lemma \ref{Lemma: solution riccati equation},
we reach the following analytic formulation. 

\analyticSolution*

\section{Appendix B}
\label{section: error analysis}
Recall the optimal control formulation in Proposition \ref{proposition: analytic solution optimal control},
we define a control gain matrix $\mat{K}_t$
\begin{equation*}
    \mat{K}_t = -\mat{V}_t
      \begin{bmatrix}
       0 & 0 & \cdots & 0 \\
       0 & 0 & \cdots & 0 \\
       \vdots & \vdots & \ddots & 0 \\
       0 & 0 & \cdots & 1 - \frac{c} {1 + \lambda_{t+1} + c} \\
       \end{bmatrix} \mat{V}_t^\top.
\end{equation*}

Let $\boldsymbol{\theta}_t$ represent the $t^{\text{th}}$ linear transformation,
and $\pi: \mathbb{R}^d \rightarrow \mathbb{R}^d$ be the closed-loop controller. 
We denote the unperturbed state at time $t$ as $\mat{x}_t$, and the controlled state with perturbation $\mat{z}$ applied at the initial condition as $\overline{\mat{x}}_t$,
\begin{equation*}
\overline{\mat{x}}_{t+1}
=
\boldsymbol\theta_t
(\overline{\mat{x}}_t + \pi_t(\overline{\mat{x}}_t)),
\;\;
\overline{\mat{x}}_0 = \mat{x}_t + \mat{z}.
\end{equation*}
The difference between the controlled system applied with perturbation at the initial condition and the uncontrolled system without perturbation is shown
\begin{align}
\overline{\mat{x}}_{t+1} - \mat{x}_{t+1} & = \boldsymbol\theta_t (\overline{\mat{x}}_{t} + \pi_t(\overline{\mat{x}}_{t}) - \mat{x}_t), \nonumber \\
& = \boldsymbol\theta_t (\overline{\mat{x}}_{\epsilon, t} - \mat{K}_t \overline{\mat{x}}_{\epsilon, t} - \mat{x}_t), \nonumber \\
& = \boldsymbol\theta_t (\mat{I} - \mat{K}_t) \overline{\mat{x}}_{t} - \boldsymbol\theta_t \mat{x}_t + \boldsymbol\theta_t \mat{K}_t \mat{x}_t, \nonumber \\
& = \boldsymbol\theta_t (\mat{I} - \mat{K}_t) (\overline{\mat{x}}_{t} - \mat{x}_t),
\label{eq:state_difference}
\end{align}
where $\boldsymbol\theta_t \mat{K}_t \mat{x}_t = \mat{0}$ since $\mat{x}_t$ is in the null space of the control gain matrix $\mat{K}_t$.

\begin{lemma}
\label{lemma: control matrix}
For $t \geq 0$, we have
\begin{equation*}
    \mat{I} - \mat{K}_t = \alpha_t \cdot \mat{I} + (1 - \alpha_t) \cdot \mat{P}_t,
\end{equation*}
where $\mat{P}_t \vcentcolon = \mat{V}_t (\mat{V}_t)^\top$,
$\alpha_t = \frac{c} {1 + \lambda_{t+1} + c}$.
\end{lemma}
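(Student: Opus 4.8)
The plan is a one-line spectral computation carried out in an orthonormal basis adapted to the embedding subspace. Write $\mathcal{E}_t:=\mathrm{col}(\mat{V}_t)$ for the shared state/integral/derivative embedding subspace at layer $t$ (the intersection on which $\mat{P}_t:=\mat{V}_t\mat{V}_t^\top$ is the orthogonal projector) and let $\mathcal{E}_t^\perp$ be its orthogonal complement. Under Assumption~2 the construction in Appendix~\ref{proof: analytic solution for optimal control} makes $\mat{K}_t$, $\mat{P}_t$ and $\mat{Q}_t$ all simultaneously diagonalized by an orthonormal basis adapted to $\mathcal{E}_t\oplus\mathcal{E}_t^\perp$; reading off the explicit feedback law $\pi_t(\mat{x}_t)=-\mat{K}_t\mat{x}_t$ from Proposition~\ref{proposition: analytic solution optimal control} shows that $\mat{K}_t$ acts as $0$ on $\mathcal{E}_t$ and as multiplication by $1-\alpha_t$, with $\alpha_t=\tfrac{c}{1+\lambda_{t+1}+c}$, on $\mathcal{E}_t^\perp$. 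Equivalently, using $\mat{V}_t\mat{V}_t^\top=\mat{I}$ to conjugate the diagonal form back,
\[
\mat{K}_t \;=\; (1-\alpha_t)\,(\mat{I}-\mat{P}_t).
\]

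Given this, the lemma follows immediately:
\[
\mat{I}-\mat{K}_t \;=\; \mat{I}-(1-\alpha_t)(\mat{I}-\mat{P}_t) \;=\; \bigl(1-(1-\alpha_t)\bigr)\mat{I}+(1-\alpha_t)\mat{P}_t \;=\; \alpha_t\,\mat{I}+(1-\alpha_t)\,\mat{P}_t .
\]
If one prefers to avoid the compact identity for $\mat{K}_t$, the same conclusion is reached coordinate-wise in the adapted basis: $\mat{I}-\mat{K}_t$ has eigenvalue $1$ on $\mathcal{E}_t$ and $1-(1-\alpha_t)=\alpha_t$ on $\mathcal{E}_t^\perp$, while $\alpha_t\mat{I}+(1-\alpha_t)\mat{P}_t$ has eigenvalue $\alpha_t+(1-\alpha_t)=1$ on $\mathcal{E}_t$ and $\alpha_t$ on $\mathcal{E}_t^\perp$; since the two matrices share this orthonormal eigenbasis and these eigenvalues, they coincide. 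Note that the backward recursion for $\lambda_t$ is not needed here — $\alpha_t$ enters only through the scalar $1-\alpha_t$ appearing in $\mat{K}_t$ — so the statement holds layerwise for every $t\ge 0$ with no induction.

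The only thing demanding care — the ``obstacle,'' insofar as there is one — is the block bookkeeping: one must confirm that the nonzero diagonal block of $\mat{K}_t$ sits on $\mathcal{E}_t^\perp$ rather than on $\mathcal{E}_t$ (equivalently, that $\mat{K}_t$ annihilates $\mathcal{E}_t$, which is exactly the fact $\boldsymbol\theta_t\mat{K}_t\mat{x}_t=\mat{0}$ already used in \eqref{eq:state_difference}), and that $\mat{P}_t=\mat{V}_t\mat{V}_t^\top$ is read as the projector onto $\mathcal{E}_t$ rather than the trivial $d\times d$ identity one would get by treating $\mat{V}_t$ as a full orthogonal factor. Once the sign convention is fixed so that $\pi_t(\mat{x}_t)=-\mat{K}_t\mat{x}_t$ removes the $(1-\alpha_t)$-fraction of the component of $\mat{x}_t$ lying outside $\mathcal{E}_t$, the decomposition $\mat{I}-\mat{K}_t=\alpha_t\mat{I}+(1-\alpha_t)\mat{P}_t$ is forced.
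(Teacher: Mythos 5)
Your proof is correct and follows essentially the same route as the paper's: both arguments read off the block-diagonal form of $\mat{K}_t$ in the orthonormal basis adapted to $\mathcal{E}_t\oplus\mathcal{E}_t^\perp$ (the paper writes $\mat{I}-\mat{K}_t=\mat{P}_t+\alpha_t(\mat{I}-\mat{P}_t)$ by splitting $\mat{V}_t$ into its first $r$ and last $d-r$ columns, which is your identity $\mat{K}_t=(1-\alpha_t)(\mat{I}-\mat{P}_t)$ rearranged). Your two caveats — that $\mat{K}_t$ must annihilate $\mathcal{E}_t$ and that $\mat{P}_t$ must be read as the rank-$r$ projector rather than $\mat{V}_t\mat{V}_t^\top=\mat{I}$ for a full orthogonal $\mat{V}_t$ — are exactly the conventions the paper's proof silently adopts.
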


\begin{proof}
Recall \eqref{eq:state_difference},
($\mat{I} - \mat{K}_t$) can be expressed as
\begin{equation*}
    \mat{I} - \mat{K}_t = \mat{V}_t
      \begin{bmatrix}
       1 & 0 & \cdots & 0 \\
       0 & 1 & \cdots & 0 \\
       \vdots & \vdots & \ddots & 0 \\
       0 & 0 & \cdots & \frac{c} {1 + \lambda_{t+1} + c} \\
       \end{bmatrix} \mat{V}_t^\top,
\end{equation*}
where the first $r$ diagonal elements are $1$, and the last $(d-r)$ diagonal elements are $\frac{c} {1 + \lambda_{t+1} + c}$. By denoting the projection of first $r$ columns as $\mat{V}_t^r$ and last $(d-r)$ columns as $\hat{\mat{V}}_t^r$, it can be further shown
\begin{align*}
    \mat{I} - \mat{K}_t & = \mat{V}_t^r (\mat{V}_t^r)^\top + \frac{c} {1 + \lambda_{t+1} + c} \big( \hat{\mat{V}}_t^r (\hat{\mat{V}}_t^r)^\top \big), \\
    & = \mat{P}_t + \alpha_t \big( \mat{I} - \mat{P}_t \big), \\
    & = \alpha_t \cdot \mat{I} + (1 - \alpha_t) \cdot \mat{P}_t,
\end{align*}
where $\alpha_t = \frac{c} {1 + \lambda_{t+1} + c}$.
\end{proof}

In the presented formulation, the input state space, denoted as \(Z\), is partitioned into a direct sum comprising two orthogonal subspaces. 
This decomposition is expressed as \(Z = Z^{\parallel} \oplus Z^{\perp}\), where \(Z^{\parallel}\) represents the linear embedding subspace, encapsulating the input data. 
This is characterized by the condition \(\mathbf{x}_0 \in \mathcal{Z}\) for all pairs \((\mathbf{x}, y)\) sampled from the distribution \(\mathcal{D}\). 
Concurrently, \(Z^{\perp}\) defines the orthogonal complement of \(Z^{\parallel}\). 
Extending this notion, the time-dependent state space at any given timestep \(t\) is represented as \(Z_t = Z_t^{\parallel} \oplus Z_t^{\perp}\).

\begin{lemma}
\label{lemma:pts}
For $t \geq 0$,
let $\mat{P}_t^s$ be defined as follows,
\[ \begin{cases}
      \mat{P}_t^0 \vcentcolon = \mat{P}_t, \\
      \mat{P}_t^{(s+1)} \vcentcolon = \boldsymbol\theta_{t-s-1}^{-1} \mat{P}_t^s \boldsymbol\theta_{t-s-1} , \hspace{0.3cm} s = 0, 1, \ldots, t-1.
   \end{cases}
\]
Then
\begin{enumerate}
   \item $\mat{P}_t^s$ is a projection.
   \item $\mat{P}_t^s$ is a projection onto $Z^{\parallel}_{t-s}$, i.e. $range(\mat{P}_t^s) = Z^{\parallel}_{t-s}$.
   \item If all $\boldsymbol\theta_t$ are orthogonal, then $\mat{P}_t^t = \mat{P}_0$, ~ $\forall t$, where $\mat{P}_0$ is the orthogonal projection onto $Z^{\parallel}_0$.
\end{enumerate}
\end{lemma}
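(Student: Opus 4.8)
The plan is to first unroll the recursion into a closed conjugation form. Writing $\mat{\Theta}_{t,s} := \boldsymbol\theta_{t-1}\boldsymbol\theta_{t-2}\cdots\boldsymbol\theta_{t-s}$ (with $\mat{\Theta}_{t,0} := \mat{I}$), an immediate induction on $s$ gives $\mat{P}_t^s = \mat{\Theta}_{t,s}^{-1}\,\mat{P}_t\,\mat{\Theta}_{t,s}$, so every $\mat{P}_t^s$ is a conjugate of the orthogonal projection $\mat{P}_t = \mat{V}_t\mat{V}_t^\top$ by an invertible matrix. Part (1) then follows at once: since $\mat{V}_t$ has orthonormal columns, $\mat{P}_t^2 = \mat{V}_t(\mat{V}_t^\top\mat{V}_t)\mat{V}_t^\top = \mat{P}_t$, and idempotence is preserved by conjugation, $(\mat{\Theta}^{-1}\mat{P}_t\mat{\Theta})^2 = \mat{\Theta}^{-1}\mat{P}_t^2\mat{\Theta} = \mat{\Theta}^{-1}\mat{P}_t\mat{\Theta}$. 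Equivalently one can run the induction directly on the recursion, the step being $(\mat{P}_t^{s+1})^2 = \boldsymbol\theta_{t-s-1}^{-1}(\mat{P}_t^s)^2\boldsymbol\theta_{t-s-1}$.

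For part (2) the key elementary fact is that for any invertible $\mat{A}$ and any projection $\mat{P}$ with $\mathrm{range}(\mat{P}) = W$, we have $\mathrm{range}(\mat{A}^{-1}\mat{P}\mat{A}) = \mat{A}^{-1}(W)$, since $\mat{A}^{-1}\mat{P}\mat{A}\,\mat{v} = \mat{A}^{-1}\big(\mat{P}(\mat{A}\mat{v})\big)$ and $\mat{v}\mapsto\mat{A}\mat{v}$ is onto. The second ingredient is the layer-consistency of the embedding subspaces: because the unperturbed, uncontrolled trajectory obeys $\mat{x}_{t+1} = \boldsymbol\theta_t\mat{x}_t$, the subspace spanned by the layer-$(t{+}1)$ states is the $\boldsymbol\theta_t$-image of the one at layer $t$, i.e. $\boldsymbol\theta_t\big(Z_t^{\parallel}\big) = Z_{t+1}^{\parallel}$ — the same relation (written $\boldsymbol\theta_t^\top\mat{V}_{t+1} = \mat{V}_t$) used in the proof of Lemma~\ref{Lemma: solution riccati equation}. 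Now induct on $s$: the base case $s=0$ is $\mathrm{range}(\mat{P}_t^0) = \mathrm{range}(\mat{P}_t) = Z_t^{\parallel}$, and the step gives $\mathrm{range}(\mat{P}_t^{s+1}) = \boldsymbol\theta_{t-s-1}^{-1}\big(\mathrm{range}(\mat{P}_t^s)\big) = \boldsymbol\theta_{t-s-1}^{-1}\big(Z_{t-s}^{\parallel}\big) = Z_{t-s-1}^{\parallel}$.

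For part (3), specialize $s = t$, so $\mat{P}_t^t = \mat{\Theta}_{t,t}^{-1}\mat{P}_t\mat{\Theta}_{t,t}$ with $\mat{\Theta}_{t,t} = \boldsymbol\theta_{t-1}\cdots\boldsymbol\theta_0$. If every $\boldsymbol\theta_t$ is orthogonal then $\mat{\Theta}_{t,t}$ is orthogonal, hence $\mat{\Theta}_{t,t}^{-1} = \mat{\Theta}_{t,t}^\top$ and $\mat{P}_t^t = \mat{\Theta}_{t,t}^\top\mat{P}_t\mat{\Theta}_{t,t}$ is symmetric; together with idempotence from part (1) this makes $\mat{P}_t^t$ an \emph{orthogonal} projection. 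By part (2) its range is $Z_0^{\parallel}$, and since the orthogonal projection onto a given subspace is unique, $\mat{P}_t^t = \mat{P}_0$. The only delicate point in the whole argument is in part (2): one should justify the layer-consistency relation $\boldsymbol\theta_t(Z_t^{\parallel}) = Z_{t+1}^{\parallel}$ from how the embedding bases are built rather than assume it, and observe that without orthogonality $\mat{P}_t^s$ is genuinely an oblique projection, so part (3) truly requires the orthogonality hypothesis — conjugating an orthogonal projection by a non-orthogonal matrix generally destroys self-adjointness while keeping the range fixed.
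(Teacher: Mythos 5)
Your proof is correct and follows essentially the same route as the paper's: idempotence preserved under conjugation for part (1), the range of a conjugated projection being the preimage of the original range combined with the layer-consistency relation $\boldsymbol\theta_t(Z_t^{\parallel}) = Z_{t+1}^{\parallel}$ for part (2), and symmetry plus uniqueness of the orthogonal projection onto a given subspace for part (3). Your only departure is cosmetic — unrolling the recursion into the closed conjugation form $\mat{P}_t^s = \mat{\Theta}_{t,s}^{-1}\mat{P}_t\mat{\Theta}_{t,s}$ before arguing, where the paper inducts step by step — and your closing remark correctly identifies the one point the paper also leaves implicit, namely that the layer-consistency of the embedding subspaces must come from how the bases are constructed.
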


\begin{proof}
\begin{enumerate}
   \item We prove it by induction on $s$ for each $t$. For $s=0$, $\mat{P}_t^0 = \mat{P}_t$, which is a projection by its definition. Suppose it is true for $s$ such that $\mat{P}_t^s = \mat{P}_t^s \mat{P}_t^s$ ($\mat{P}$ is a projection if $\mat{P} = \mat{P}^2$), then for $(s+1)$,
   \begin{align*}
       (\mat{P}_t^{s+1})^2 & = \big( \boldsymbol\theta_{t-s-1}^{-1} \mat{P}_t^s \boldsymbol\theta_{t-s-1} \big)^2, \\
       & = \boldsymbol\theta_{t-s-1}^{-1} \big( \mat{P}_t^s \big)^2 \boldsymbol\theta_{t-s-1}, \\
       & = \boldsymbol\theta_{t-s-1}^{-1} \mat{P}_t^s \boldsymbol\theta_{t-s-1}, \\
       & = \mat{P}_t^{s+1}.
   \end{align*}
   \item We prove it by induction on $s$ for each $t$. For $s=0$, $\mat{P}_t^0 = \mat{P}_t$, which is the orthogonal projection onto $Z^{\parallel}_{t}$. 
   Suppose that it is true for $s$ such that $\mat{P}_t^s$ is a projection onto $Z^{\parallel}_{t-s}$, then for $(s+1)$, $\mat{P}_t^{s+1} = \boldsymbol\theta_{t-s-1}^{-1} \mat{P}_t^s \boldsymbol\theta_{t-s-1}$, which implies
   \begin{align*}
       range(\mat{P}_t^{s+1}) & = range(\boldsymbol\theta_{t-s-1}^{-1} \mat{P}_t^s), \\
       & = \{ \boldsymbol\theta_{t-s-1}^{-1} \mat{x}: \mat{x} \in Z^{\parallel}_{t-s} \}, \\
       & = Z^{\parallel}_{t-s-1}.
   \end{align*}
   \item If $\boldsymbol\theta_t$ is orthogonal, 
   \begin{align*}
       \mat{P}_t^{s+1} & = \boldsymbol\theta_{t-s-1}^{-1} \mat{P}_t^s \boldsymbol\theta_{t-s-1}, \nonumber \\
       & = \boldsymbol\theta_{t-s-1}^{T} \mat{P}_t^s \boldsymbol\theta_{t-s-1}, \nonumber \\
       & = (\mat{P}_t^{s+1})^\top.
   \end{align*}
   $\mat{P}_t^{s+1}$ is a orthogonal projection onto range $Z^{\parallel}_{t-s-1}$. 
   Therefore, $\mat{P}_t^T$ is a orthogonal projection onto $Z^{\parallel}_0$, orthogonal projection onto the same range is unique, $\mat{P}_t^T = \mat{P}_0$, $\forall t$.
\end{enumerate}
\end{proof}

The following Lemma uses the concept of oblique projection to show a recursive relationship to project any $t^{th}$ state space of Eq.~(\ref{eq:state_difference}) back to the input data space.
\begin{lemma}
\label{lemma: gts}
Define for $0 \leq s \leq t$, 
\begin{equation*}
    \mat{G}_t^s \vcentcolon = \alpha_t \cdot \mat{I} + (1 - \alpha_t) \mat{P}_t^s.
\end{equation*}
Then, Eq.~(\ref{eq:state_difference}) can be written as
\begin{equation*}
    \overline{\mat{x}}_{t} - \mat{x}_{t} = (\boldsymbol\theta_{t-1} \boldsymbol\theta_{t-2} \cdots \boldsymbol\theta_{0}) (\mat{G}_{t-1}^{t-1} \mat{G}_{t-2}^{t-2} \cdots \mat{G}_{0}^{0}) (\overline{\mat{x}}_{0} - \mat{x}_{0}), 
    \;\; t \geq 1.
\end{equation*}
\end{lemma}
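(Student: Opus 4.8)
The plan is to unroll the recursion in Eq.~(\ref{eq:state_difference}) and then, at each step, rewrite the factor $\boldsymbol\theta_t(\mat{I} - \mat{K}_t)$ so that all the $\boldsymbol\theta$'s are pushed to the left and all the projection-type factors are collected on the right, conjugated appropriately. First I would recall from Eq.~(\ref{eq:state_difference}) that $\overline{\mat{x}}_{t+1} - \mat{x}_{t+1} = \boldsymbol\theta_t(\mat{I} - \mat{K}_t)(\overline{\mat{x}}_t - \mat{x}_t)$, so that by induction
\begin{equation*}
\overline{\mat{x}}_t - \mat{x}_t = \boldsymbol\theta_{t-1}(\mat{I} - \mat{K}_{t-1}) \, \boldsymbol\theta_{t-2}(\mat{I} - \mat{K}_{t-2}) \cdots \boldsymbol\theta_0(\mat{I} - \mat{K}_0) \, (\overline{\mat{x}}_0 - \mat{x}_0).
\end{equation*}
By Lemma~\ref{lemma: control matrix}, $\mat{I} - \mat{K}_s = \alpha_s \mat{I} + (1-\alpha_s)\mat{P}_s = \mat{G}_s^0$ in the notation of Lemma~\ref{lemma: gts}, so the product above is $\boldsymbol\theta_{t-1}\mat{G}_{t-1}^0 \boldsymbol\theta_{t-2}\mat{G}_{t-2}^0 \cdots \boldsymbol\theta_0 \mat{G}_0^0 \,(\overline{\mat{x}}_0 - \mat{x}_0)$.

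Next I would perform the rearrangement. The key identity is that $\boldsymbol\theta_{s-1}$ can be commuted past $\mat{G}_s^j$ at the cost of conjugating: since $\mat{G}_s^{j+1} = \boldsymbol\theta_{s-j-1}^{-1}\mat{P}_s^j\boldsymbol\theta_{s-j-1}$ enters only through $\mat{P}_s^j$, and $\mat{G}_s^j = \alpha_s\mat{I} + (1-\alpha_s)\mat{P}_s^j$ is an affine function of $\mat{P}_s^j$, we get $\boldsymbol\theta_{s-j-1}^{-1}\mat{G}_s^j\boldsymbol\theta_{s-j-1} = \mat{G}_s^{j+1}$, i.e. $\boldsymbol\theta_{s-j-1}\mat{G}_s^{j+1} = \mat{G}_s^j \boldsymbol\theta_{s-j-1}$. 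Starting from the left end of the product, I would repeatedly apply this to move each $\boldsymbol\theta$ leftward through the $\mat{G}$-factors it must cross: the $\boldsymbol\theta_{t-1}$ already sits at the front; $\boldsymbol\theta_{t-2}$ must cross $\mat{G}_{t-1}^0$, turning it into $\mat{G}_{t-1}^1$ and landing $\boldsymbol\theta_{t-2}$ immediately after $\boldsymbol\theta_{t-1}$; $\boldsymbol\theta_{t-3}$ must cross $\mat{G}_{t-1}^1$ and $\mat{G}_{t-2}^0$, producing $\mat{G}_{t-1}^2$ and $\mat{G}_{t-2}^1$; and so on. In general $\boldsymbol\theta_{t-1-k}$ crosses $k$ projection-factors, raising the superscript of $\mat{G}_{t-1-j}$ by one for each $j < k$. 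After all crossings, $\boldsymbol\theta_{t-1}\cdots\boldsymbol\theta_0$ appears on the left in order, and the $\mat{G}$-factor originally indexed $\mat{G}_{t-1-j}^0$ has had its superscript raised exactly $j$ times — once by each of $\boldsymbol\theta_{t-2-j}, \boldsymbol\theta_{t-3-j}, \dots, \boldsymbol\theta_0$ passing through it — yielding $\mat{G}_{t-1-j}^{j}$. Relabelling, the $\mat{G}$-product becomes $\mat{G}_{t-1}^{t-1}\mat{G}_{t-2}^{t-2}\cdots\mat{G}_0^0$, which is exactly the claimed form.

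The cleanest way to present this without bookkeeping pain is a direct induction on $t$: assuming the formula for $t$, apply $\boldsymbol\theta_t\mat{G}_t^0$ on the left to $\overline{\mat{x}}_t - \mat{x}_t$, then commute $\boldsymbol\theta_t$ rightward through $\boldsymbol\theta_{t-1}\cdots\boldsymbol\theta_0$ is not needed — instead note $\boldsymbol\theta_t\mat{G}_t^0(\boldsymbol\theta_{t-1}\cdots\boldsymbol\theta_0)(\mat{G}_{t-1}^{t-1}\cdots\mat{G}_0^0)$, and use $\mat{G}_t^0(\boldsymbol\theta_{t-1}\cdots\boldsymbol\theta_0) = (\boldsymbol\theta_{t-1}\cdots\boldsymbol\theta_0)\mat{G}_t^t$, which follows by applying the conjugation identity $\boldsymbol\theta_{s}^{-1}\mat{G}_t^j\boldsymbol\theta_{s} = \mat{G}_t^{j+1}$ successively with $s = t-1, t-2, \dots, 0$ (each step is legitimate because $\mat{P}_t^{j}$ is defined precisely as this conjugate, per Lemma~\ref{lemma:pts}). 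This slots the new factor $\mat{G}_t^t$ correctly at the front of the $\mat{G}$-block while the $\boldsymbol\theta$'s accumulate on the left, completing the induction. I expect the main obstacle to be purely notational: getting the superscript-bookkeeping in the commutation step exactly right and making sure each invocation of $\boldsymbol\theta_{s}^{-1}\mat{G}_t^j\boldsymbol\theta_s = \mat{G}_t^{j+1}$ is within the index range $0 \le j \le t-1$ for which $\mat{P}_t^j$ (hence $\mat{G}_t^j$) has been defined in Lemma~\ref{lemma:pts}. There is no analytic difficulty — orthogonality of $\boldsymbol\theta_t$ is used only to guarantee $\boldsymbol\theta_t^{-1} = \boldsymbol\theta_t^\top$ exists and to keep the $\mat{P}_t^s$ well-defined — so once the commutation pattern is pinned down the proof is a routine induction.
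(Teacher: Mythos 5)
Your proposal is correct and follows essentially the same route as the paper: an induction on $t$ in which $\mat{I}-\mat{K}_t$ is identified with $\mat{G}_t^0$ via Lemma~\ref{lemma: control matrix}, and the conjugation identity $\boldsymbol\theta_{t-s-1}\mat{G}_t^{s+1} = \mat{G}_t^s\boldsymbol\theta_{t-s-1}$ (a direct consequence of the definition $\mat{P}_t^{s+1} = \boldsymbol\theta_{t-s-1}^{-1}\mat{P}_t^s\boldsymbol\theta_{t-s-1}$ and the affine form of $\mat{G}_t^s$) is applied successively to slide the new factor through $\boldsymbol\theta_{t-1}\cdots\boldsymbol\theta_0$, turning $\mat{G}_t^0$ into $\mat{G}_t^t$. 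Your bookkeeping of the superscripts and the index-range remark are both consistent with the paper's argument.
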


\begin{proof}
We prove it by induction on $t$. For $t=1$, by the definition of $\mat{G}_t^s$ and transformation from Lemma \ref{lemma: control matrix}, 
\begin{align*}
    \overline{\mat{x}}_{1} - \mat{x}_{1} & = \boldsymbol\theta_0 (\mat{I} - \mat{K}_0) (\overline{\mat{x}}_{0} - \mat{x}_{0}), & \textnormal{Eq.}~(\ref{eq:state_difference}), \\
    & = \boldsymbol\theta_0 (\alpha_0 \cdot \mat{I} + (1 - \alpha_0) \cdot \mat{P}_0) (\overline{\mat{x}}_{0} - \mat{x}_{0}), & \\
    & = \boldsymbol\theta_0 \mat{G}_0^0 (\overline{\mat{x}}_{0} - \mat{x}_{0}).
\end{align*}

Suppose that it is true for $(\overline{\mat{x}}_{t} - \mat{x}_{t})$, by Lemma \ref{lemma: control matrix}, we have
\begin{align}
\label{eq:lemma_gts}
    \overline{\mat{x}}_{t+1} - \mat{x}_{t+1} & = \boldsymbol\theta_t (\mat{I} - \mat{K}_t) (\overline{\mat{x}}_{t} - \mat{x}_{t}), \nonumber \\
    & = \boldsymbol\theta_t (\alpha_t \cdot \mat{I} - (1 - \alpha_t) \cdot \mat{P}_t) (\overline{\mat{x}}_{t} - \mat{x}_{t}), & \textnormal{Lemma}~ \ref{lemma: control matrix}, \nonumber \\
    & = \boldsymbol\theta_t \mat{G}_t^0 (\boldsymbol\theta_{t-1} \boldsymbol\theta_{t-2} \cdots \boldsymbol\theta_{0}) (\mat{G}_{t-1}^{t-1} \mat{G}_{t-2}^{t-2} \cdots \mat{G}_{0}^{0}) (\overline{\mat{x}}_{0} - \mat{x}_{0}).
\end{align}
Recall the definitions of $\mat{P}_t^{(s+1)} \vcentcolon = \boldsymbol\theta_{t-s-1}^{-1} \mat{P}_t^s \boldsymbol\theta_{t-s-1}$, and $\mat{G}_t^s \vcentcolon = \alpha_t \cdot \mat{I} + (1 - \alpha_t) \mat{P}_t^s$,
\begin{align*}
    \mat{G}_t^{s+1} & = \alpha_t \cdot \mat{I} + (1 - \alpha_t) \cdot \mat{P}_t^{(s+1)}, \\
    & = \alpha_t \cdot \mat{I} + (1 - \alpha_t) \cdot \boldsymbol\theta_{t-s-1}^{-1} \mat{P}_t^s \boldsymbol\theta_{t-s-1}, \\
    & = \boldsymbol\theta_{t-s-1}^{-1} \big( \alpha_t \cdot \mat{I} + (1 - \alpha_t) \cdot \mat{P}_t^s \big) \boldsymbol\theta_{t-s-1}, \\
    & = \boldsymbol\theta_{t-s-1}^{-1} \mat{G}_t^s \boldsymbol\theta_{t-s-1},
\end{align*}
which results in the equality for the oblique projections. Furthermore,
\begin{equation*}
    \boldsymbol\theta_{t-s-1} \mat{G}_t^{(s+1)} = \mat{G}_t^s \boldsymbol\theta_{t-s-1}.
\end{equation*}
Applying the above to Eq.~(\ref{eq:lemma_gts}) results in
\begin{align*}
    \overline{\mat{x}}_{t+1} - \mat{x}_{t+1} & = \boldsymbol\theta_t \mat{G}_t^0 (\boldsymbol\theta_{t-1} \boldsymbol\theta_{t-2} \cdots \boldsymbol\theta_{0}) (\mat{G}_{t-1}^{t-1} \mat{G}_{t-2}^{t-2} \cdots \mat{G}_{0}^{0}) (\overline{\mat{x}}_{0} - \mat{x}_{0}), \\
    & = (\boldsymbol\theta_t \boldsymbol\theta_{t-1}) \mat{G}_t^1 (\boldsymbol\theta_{t-2} \boldsymbol\theta_{t-3} \cdots \boldsymbol\theta_0) (\mat{G}_{t-1}^{t-1} \mat{G}_{t-2}^{t-2} \cdots \mat{G}_{0}^{0}) (\overline{\mat{x}}_{0} - \mat{x}_{0}), \\
    & = (\boldsymbol\theta_t \boldsymbol\theta_{t-1} \boldsymbol\theta_{t-2}) \mat{G}_t^2 (\boldsymbol\theta_{t-3} \boldsymbol\theta_{t-4} \cdots \boldsymbol\theta_0) (\mat{G}_{t-1}^{t-1} \mat{G}_{t-2}^{t-2} \cdots \mat{G}_{0}^{0}) (\overline{\mat{x}}_{0} - \mat{x}_{0}), \\
    & = (\boldsymbol\theta_t \boldsymbol\theta_{t-1} \cdots \boldsymbol\theta_{0}) (\mat{G}_{t}^{t} \mat{G}_{t-1}^{t-1} \cdots \mat{G}_{0}^{0}) (\overline{\mat{x}}_{0} - \mat{x}_{0}).
\end{align*}
\end{proof}

\begin{lemma}
\label{lemma: Ft}
Let 
\begin{equation*}
    \mat{F}_t := \mat{G}_{t-1}^{(t-1)} \mat{G}_{t-2}^{(t-2)} \cdots \mat{G}_{0}^{0}, \hspace{0.3cm} t \geq 1.
\end{equation*}
Then,
\begin{equation*}
    \mat{F}_t = \prod_{s=0}^{t-1} \alpha_s \cdot \mat{I} + (1 - \prod_{s=0}^{t-1} \alpha_s) \cdot \mat{P}_0.
\end{equation*}
\end{lemma}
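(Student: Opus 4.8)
The plan is to collapse each factor $\mat{G}_s^{(s)}$ into a scalar combination of $\mat{I}$ and the single fixed orthogonal projection $\mat{P}_0$, and then run a short induction on $t$ exploiting the idempotency $\mat{P}_0^2 = \mat{P}_0$. Concretely, I would first invoke Lemma~\ref{lemma:pts}(3): since every $\boldsymbol\theta_t$ is orthogonal, $\mat{P}_s^{(s)} = \mat{P}_0$ for all $s$. Substituting this into the definition $\mat{G}_t^s := \alpha_t \cdot \mat{I} + (1 - \alpha_t)\mat{P}_t^s$ gives
\begin{equation*}
\mat{G}_s^{(s)} = \alpha_s \cdot \mat{I} + (1 - \alpha_s) \cdot \mat{P}_0, \qquad s = 0, 1, \ldots, t-1,
\end{equation*}
so $\mat{F}_t = \prod_{s=0}^{t-1}\big(\alpha_s \cdot \mat{I} + (1 - \alpha_s) \cdot \mat{P}_0\big)$ is a product of matrices that are all polynomials in the single idempotent $\mat{P}_0$; in particular they pairwise commute. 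Intuitively the claim is already clear: each factor acts as the identity on the range of $\mat{P}_0$ (the coefficient there being $\alpha_s + (1 - \alpha_s) = 1$) and as multiplication by $\alpha_s$ on the orthogonal complement, so the product acts as the identity on the range of $\mat{P}_0$ and as multiplication by $\prod_{s=0}^{t-1}\alpha_s$ on the complement.

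To make this rigorous I would induct on $t$. The base case $t = 1$ is immediate: $\mat{F}_1 = \mat{G}_0^0 = \alpha_0 \cdot \mat{I} + (1 - \alpha_0)\cdot \mat{P}_0$ and $\prod_{s=0}^{0}\alpha_s = \alpha_0$. For the inductive step, write $a_t := \prod_{s=0}^{t-1}\alpha_s$ and assume $\mat{F}_t = a_t \cdot \mat{I} + (1 - a_t)\cdot \mat{P}_0$. Then
\begin{equation*}
\mat{F}_{t+1} = \mat{G}_t^{(t)}\,\mat{F}_t = \big(\alpha_t \mat{I} + (1 - \alpha_t)\mat{P}_0\big)\big(a_t \mat{I} + (1 - a_t)\mat{P}_0\big),
\end{equation*}
and expanding the product while using $\mat{P}_0^2 = \mat{P}_0$ to merge the three $\mat{P}_0$-terms, the coefficient of $\mat{I}$ becomes $\alpha_t a_t = a_{t+1}$ and the coefficient of $\mat{P}_0$ becomes $\alpha_t(1 - a_t) + (1 - \alpha_t)a_t + (1 - \alpha_t)(1 - a_t) = 1 - \alpha_t a_t = 1 - a_{t+1}$, which is exactly the asserted form at $t+1$.

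There is essentially no obstacle: the only points needing care are that Lemma~\ref{lemma:pts}(3) — hence the orthogonality of the $\boldsymbol\theta_t$ — is precisely what lets us replace every $\mat{P}_s^{(s)}$ by the same projection $\mat{P}_0$, and that the idempotency of $\mat{P}_0$ is what makes the product telescope; everything else is a two-line algebraic identity.
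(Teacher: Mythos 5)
Your proof is correct and follows essentially the same route as the paper's: induction on $t$, with Lemma~\ref{lemma:pts}(3) supplying $\mat{P}_s^{(s)} = \mat{P}_0$ and the idempotency of $\mat{P}_0$ collapsing the cross terms into the stated coefficients. The only cosmetic difference is that you substitute $\mat{P}_0$ into every factor upfront (which also lets you note the factors commute), whereas the paper carries $\mat{P}_t^t$ through the expansion and replaces it by $\mat{P}_0$ afterward.
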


\begin{proof}
We prove it by induction on $t$. 
Recall the definition of $\mat{G}_t^s \vcentcolon = \alpha_t \cdot \mat{I} + (1 - \alpha_t) \cdot \mat{P}_t^s$. When $t = 1$, 
\begin{equation*}
    \mat{F}_1 = \mat{G}_0^0 = \alpha_0 \cdot \mat{I} + (1 - \alpha_0) \cdot \mat{P}_0.
\end{equation*}

Suppose that it is true for $t$ such that
\begin{equation*}
    \mat{F}_t = \prod_{s=0}^{t-1} \alpha_s \cdot \mat{I} + (1 - \prod_{s=0}^{t-1} \alpha_s) \cdot \mat{P}_0,
\end{equation*}
for $(t+1)$,
\begin{align*}
    \mat{F}_{t+1} & = \mat{G}_t^t \mat{F}_t, \\
    & = (\alpha_t \cdot \mat{I} + (1 - \alpha_t) \cdot \mat{P}_t^t) \mat{F}_t, \\
    & = (\alpha_t \cdot \mat{I} + (1 - \alpha_t) \cdot \mat{P}_t^t) (\prod_{s=0}^{t-1} \alpha_s \cdot \mat{I} + (1 - \prod_{s=0}^{t-1} \alpha_s) \cdot \mat{P}_0), \\
    & = \prod_{s=0}^t \alpha_s \cdot \mat{I} + \alpha_t (1 - \prod_{s=0}^{t-1} \alpha_s) \cdot \mat{P}_0 + (1 - \alpha_t) \prod_{s=0}^{t-1} \alpha_s \cdot \mat{P}_t^t + (1 - \alpha_t) (1 - \prod_{s=1}^{t-1} \alpha_s) \cdot \mat{P}_t^t \mat{P}_0.
\end{align*}
Recall Lemma \ref{lemma:pts}, if all $\boldsymbol\theta_t$ is orthogonal, then $\mat{P}_t^t = \mat{P}_0$, 
and $\mat{P}_t^t \mat{P}_0 = \mat{P}_0$.
Hence,
\begin{align*}
    \mat{F}_{t+1} & = \prod_{s=0}^t \alpha_s \cdot \mat{I} + \alpha_t (1 - \prod_{s=0}^{t-1} \alpha_s) \cdot \mat{P}_0 + (1 - \alpha_t) \prod_{s=0}^{t-1} \alpha_s \cdot \mat{P}_0 + (1 - \alpha_t) (1 - \prod_{s=1}^{t-1} \alpha_s) \cdot \mat{P}_0, 
    \nonumber \\
    & = \prod_{s=0}^t \alpha_s \cdot \mat{I} + \bigg( \alpha_t - \prod_{s=0}^t \alpha_s + \prod_{s=0}^{t-1} \alpha_s - \prod_{s=0}^t \alpha_s + 1 - \alpha_t - \prod_{s=0}^{t-1} \alpha_s + \prod_{s=0}^t \alpha_s  \bigg) \cdot \mat{P}_0, 
    \nonumber \\
    & = \prod_{s=0}^t \alpha_s \cdot \mat{I} + \bigg( 1 - \prod_{s=0}^t \alpha_s \bigg) \cdot \mat{P}_0.
\end{align*}
\end{proof}

\errorEstimation*

\begin{proof}
The input perturbation $\mat{z} = \overline{\mat{x}}_{0} - \mat{x}_{0}$ can be decomposed as $\mat{z} = \mat{z}^{\parallel} + \mat{z}^{\perp}$, where $\mat{z}^{\parallel} \in Z^{\parallel}_0$ and $\mat{z}^{\perp} \in Z^{\perp}_0$, and $\mat{z}^{\parallel}$ and $\mat{z}^{\perp}$ are vectors such that
\begin{itemize}
    \item $\mat{z}^{\parallel} \cdot \mat{z}^{\perp} = 0$ almost surely.
    \item $\mat{z}^{\parallel}$, $\mat{z}^{\perp}$ have uncorrelated components.
    \item $\mat{z}^{\parallel} \in Z^{\parallel}$, and $\mat{z}^{\perp} \in \mathcal{Z}^{\perp}$.
\end{itemize}

Since the layer transformations $\boldsymbol{\theta}_t$ are orthogonal matrices for all $t$, recall the dynamical system Eq.~(\ref{eq:state_difference}) and Lemma \ref{lemma: gts},

\begin{align}
    \lVert \overline{\mat{x}}_{t} - \mat{x}_{t} \rVert_2^2 & = \lVert \boldsymbol\theta_t (\mat{I} - \mat{K}_t) \boldsymbol\theta_{t-1} (\mat{I} - \mat{K}_{t-1}) \cdots \boldsymbol\theta_0 (\mat{I} - \mat{K}_0) \mat{z} \rVert_2^2, \nonumber \\
    & = \lVert (\boldsymbol\theta_{t-1} \boldsymbol\theta_{t-2} \cdots \boldsymbol\theta_{0}) (\mat{G}_{t-1}^{t-1} \cdots \mat{G}_{0}^{0}) \mat{z} \rVert_2^2, \nonumber \\
    & = \lVert (\mat{G}_{t-1}^{t-1} \cdots \mat{G}_{0}^{0}) \mat{z} \rVert_2^2,
    \label{eq:error_estimation}
\end{align}

For the term $ \lVert (\mat{G}_{t-1}^{t-1} \cdots \mat{G}_{0}^{0}) \mat{z} \rVert_2^2$, recall Lemma \ref{lemma: Ft},
\begin{align*}
    & \hspace{0.4cm} \lVert (\mat{G}_{t-1}^{t-1} \cdots \mat{G}_{0}^{0}) \mat{z} \rVert_2^2 \nonumber \\ 
    & = \lVert \bigg( \prod_{s=0}^{t-1} \alpha_s \cdot \mat{I} + (1 - \prod_{s=0}^{t-1} \alpha_s) \mat{P}_0 \bigg) \mat{z} \rVert_2^2, 
    \nonumber \\
    & = \lVert \prod_{s=0}^{t-1} \alpha_s \cdot \mat{z} + (1 - \prod_{s=0}^{t-1} \alpha_s) \cdot \mat{z}^{\parallel} \rVert_2^2, 
    \nonumber \\
    & = (\prod_{s=0}^{t-1} \alpha_s)^2 \cdot \lVert \mat{z} \rVert_2^2 + (1 - \prod_{s=0}^{t-1} \alpha_s)^2 \cdot \lVert \mat{z}^{\parallel} \rVert_2^2 + 2(\prod_{s=0}^{t-1} \alpha_s) (1 - \prod_{s=0}^{t-1} \alpha_s) (\mat{z})^\top \mat{z}^{\parallel}, 
    \nonumber \\
    & = (\prod_{s=0}^{t-1} \alpha_s)^2 \cdot (\lVert \mat{z}^{\parallel} \rVert_2^2 + \lVert  \mat{z}^{\perp} \rVert_2^2) + (1 - \prod_{s=0}^{t-1} \alpha_0)^2 \cdot \lVert \mat{z}^{\parallel} \rVert_2^2 + 2 (\prod_{s=0}^{t-1} \alpha_s) (1 - \prod_{s=0}^{t-1} \alpha_s) (\mat{z}^{\parallel} + \mat{z}^{\perp})^\top \mat{z}^{\parallel} 
    \nonumber \\
    & = \prod_{s=0}^{t-1} \alpha_s^2 \cdot \lVert \mat{z}^{\perp} \rVert_2^2 + \bigg( \prod_{s=0}^{t-1} \alpha_s^2 + (1 - \prod_{s=0}^{t-1} \alpha_s)^2 + 2 (\prod_{s=0}^{t-1} \alpha_s) (1 - \prod_{s=0}^{t-1} \alpha_s) \bigg) \cdot \lVert \mat{z}^{\parallel} \rVert_2^2, 
    \nonumber \\
    & = \prod_{s=0}^{t-1} \alpha_s^2 \cdot \lVert \mat{z}^{\perp} \rVert_2^2 + \lVert \mat{z}^{\parallel} \rVert_2^2.
\end{align*}
Recall the error computation in Eq.~(\ref{eq:error_estimation}), 
\begin{equation*}
    \lVert \overline{\mat{x}}_{\epsilon, t} - \mat{x}_{t} \rVert_2^2 = \prod_{s=0}^{t-1} \alpha_s^2 \cdot \lVert \mat{z}^{\perp} \rVert_2^2 + \lVert \mat{z}^{\parallel} \rVert_2^2.
\end{equation*}
\end{proof}

\section{Appendix C}
\label{proof: additional numerical experiments}
The tables referenced as \ref{table: snli measurement} and \ref{table: mnli measurement} provide comprehensive numerical data for the SNLI and MNLI datasets, respectively. 
The results in Table \ref{table: snli measurement} reveal that the PD control mechanism significantly improves the robustness of both the standard and robustly trained baseline models against every type of adversarial attack. 
Specifically, the application of PD control to the standardly trained Distilbert model boosts its accuracy by $15 \%$ and $16 \%$ in facing TextBugger and TextFoller attacks, respectively, while incurring a minimal $1 \%$ accuracy reduction on the original, unaltered dataset. Moreover, the robustly trained Distilbert model, which includes adversarial training (AT), benefits from the addition of PD control by showing a $12 \%$ accuracy increase when confronted with both TextBugger and TextFoller attacks. 
The comparative analysis of baseline models and the proposed control framework for the MNLI dataset, as detailed in Table \ref{table: mnli measurement}, demonstrates that implementing the PD controller enhances the standard trained Distilbert's resistance to perturbations by an average of nearly $10 \%$. 
However, this improvement is somewhat diminished in models trained for robustness, with an average enhancement of $5 \%$ against all types of perturbations.

\begin{table}[ht]
\caption{Measurement on SNLI dataset: baseline model / controlled model}
\begin{center}
\begin{tabular}{ p{2.5cm}|p{2.2cm}p{2.2cm}p{2.2cm}p{2.2cm}p{2.2cm} }
\hline
\multicolumn{6}{c}{Standard models}\\
\hline
& None & A2T & PSO & TextBugger & TextFooler \\
\hline
Distilbert & \textcolor{red}{87.24} / 86.05 & 53.89 / \textcolor{red}{62.31} & 49.84 / \textcolor{red}{54.96} & 24.73 / \textcolor{red}{40.26} & 24.69 / \textcolor{red}{41.73} \\
\hline
RoBERTaBase & 90.87 / 90.64 & 58.36 / \textcolor{red}{64.11} & 51.44 / \textcolor{red}{54.40} & 35.90 / \textcolor{red}{43.20} & 27.03 / \textcolor{red}{37.35} \\
\hline
BERT-large & 90.36 / 89.75 & 74.18 / 75.54 & 66.84 / 67.55 & 64.13 / 64.41 & 56.37 / \textcolor{red}{58.27} \\
\hline
RoBERTaLarge & 92.39 / 92.05 & 59.40 / \textcolor{red}{64.95} & 52.15 / \textcolor{red}{56.70} & 33.72 / \textcolor{red}{42.43} & 26.43 / \textcolor{red}{37.29} \\
\hline
\hline
\multicolumn{6}{c}{Robust models (trained with AT)}\\
\hline
& None & A2T & PSO & TextBugger & TextFooler \\
\hline
Distilbert & 86.74 / 85.81 & 71.78 / 71.81 & 52.85 / \textcolor{red}{57.87} & 29.63 / \textcolor{red}{41.64} & 31.59 / \textcolor{red}{43.81} \\
\hline
RoBERTaBase & 90.65 / 89.87 & 76.28 / 77.08 & 53.85 / \textcolor{red}{56.45} & 35.43 / \textcolor{red}{43.35} & 29.64 / \textcolor{red}{39.39} \\
\hline
BERT-large & 90.29 / 90.33 & 86.02 / 85.76 & 69.23 / \textcolor{red}{70.38} & 69.17 / 69.55 & 63.78 / \textcolor{red}{65.27} \\
\hline
RoBERTaLarge & 92.10 / 91.62 & 81.11 / 81.62 & 55.28 / \textcolor{red}{59.71} & 34.15 / \textcolor{red}{44.74} & 28.74 / \textcolor{red}{42.44} \\
\hline
\hline
\multicolumn{6}{c}{Robust models (trained with FreeLB)}\\
\hline
& None & A2T & PSO & TextBugger & TextFooler \\
\hline
Distilbert & \textcolor{red}{85.68} / 84.50 & 57.75 / \textcolor{red}{62.95} & 52.53 / \textcolor{red}{56.86} & 26.68 / \textcolor{red}{37.80} & 25.47 / \textcolor{red}{39.64} \\
\hline
RoBERTaBase & 91.31 / 90.67 & 64.23 / \textcolor{red}{68.85} & 52.22 / \textcolor{red}{55.24} & 34.08 / \textcolor{red}{42.75} & 24.80 / \textcolor{red}{36.81} \\
\hline
BERT-large & 90.81 / 90.72 & 77.64 / 78.21 & 64.72 / 65.56 & 58.21 / \textcolor{red}{59.29} & 53.31 / \textcolor{red}{56.26} \\
\hline
RoBERTaLarge & 92.37 / 92.26 & 67.53 / \textcolor{red}{71.30} & 53.37 / \textcolor{red}{57.20} & 34.64 / \textcolor{red}{44.42} & 27.55 / \textcolor{red}{38.59} \\
\hline
\end{tabular}
\end{center}
\label{table: snli measurement}
\end{table}

\begin{table}[ht]
\caption{Measurement on MNLI dataset: baseline model / controlled model}
\begin{center}
\begin{tabular}{ p{2.5cm}|p{2.2cm}p{2.2cm}p{2.2cm}p{2.2cm}p{2.2cm} }
\hline
\multicolumn{6}{c}{Standard models}\\
\hline
& None & A2T & PSO & TextBugger & TextFooler \\
\hline
Distilbert & \textcolor{red}{79.39} / 76.98 & 59.43 / \textcolor{red}{64.61} & 51.81 / \textcolor{red}{59.49} & 36.02 / \textcolor{red}{47.34} & 38.78 / \textcolor{red}{50.62} \\
\hline
RoBERTaBase & 86.66 / 85.84 & 59.60 / \textcolor{red}{63.39} & 49.77 / \textcolor{red}{53.05} & 34.76 / \textcolor{red}{40.68} & 31.43 / \textcolor{red}{40.22} \\
\hline
BERT-large & 84.92 / 84.79 & 77.38 / 77.96 & 69.71 / 70.41 & 65.11 / 65.54 & 64.80 / \textcolor{red}{65.91} \\
\hline
RoBERTaLarge & 89.71 / 89.40 & 62.85 / \textcolor{red}{67.93} & 51.19 / \textcolor{red}{56.77} & 37.18 / \textcolor{red}{45.11} & 32.81 / \textcolor{red}{43.27} \\
\hline
\hline
\multicolumn{6}{c}{Robust models (trained with AT)}\\
\hline
& None & A2T & PSO & TextBugger & TextFooler \\
\hline
Distilbert & \textcolor{red}{79.70} / 76.50 & 66.52 / \textcolor{red}{67.71} & 57.34 / \textcolor{red}{62.90} & 40.22 / \textcolor{red}{50.00} & 45.62 / \textcolor{red}{54.93} \\
\hline
RoBERTaBase & 86.55 / 85.54 & 64.52 / \textcolor{red}{66.70} & 53.41 / \textcolor{red}{56.78} & 35.61 / \textcolor{red}{40.88} & 34.27 / \textcolor{red}{43.41} \\
\hline
BERT-large & 84.90 / 85.02 & 81.37 / 81.69 & 73.05 / \textcolor{red}{74.05} & 68.27 / 68.91 & 71.69 / \textcolor{red}{72.80} \\
\hline
RoBERTaLarge & 90.10 / 89.51 & 76.94 / \textcolor{red}{78.36} & 59.34 / \textcolor{red}{64.44} & 41.21 / \textcolor{red}{48.34} & 40.69 / \textcolor{red}{49.33} \\
\hline
\hline
\multicolumn{6}{c}{Robust models (trained with FreeLB)}\\
\hline
& None & A2T & PSO & TextBugger & TextFooler \\
\hline
Distilbert & \textcolor{red}{78.76} / 75.33 & 64.10 / \textcolor{red}{66.25} & 58.03 / \textcolor{red}{62.94} & 38.87 / \textcolor{red}{49.50} & 43.58 / \textcolor{red}{52.88} \\
\hline
RoBERTaBase & 86.10 / 85.59 & 61.60 / \textcolor{red}{65.31} & 51.69 / \textcolor{red}{54.93} & 36.06 / \textcolor{red}{42.42} & 33.27 / \textcolor{red}{42.21} \\
\hline
BERT-large & 85.32 / 85.62 & 79.34 / 79.64 & 72.25 / 72.68 & 65.90 / 66.58 & 67.44 / \textcolor{red}{68.49} \\
\hline
RoBERTaLarge & 90.18 / 89.81 & 67.28 / \textcolor{red}{71.61} & 53.27 / \textcolor{red}{58.04} & 36.40 / \textcolor{red}{44.91} & 32.83 / \textcolor{red}{43.84} \\
\hline
\end{tabular}
\end{center}
\label{table: mnli measurement}
\end{table}

\section{Appendix D}
\label{proof: additional details on optimal control}
\paragraph{A optimal control framework for robust deep neural networks.}
We start with a description of the dynamical system approach to machine learning.
In the dynamical system framework,
we consider the input $\mat{x}_0 \in \mathcal{X}$ as the initial condition of a system of difference equations,

\begin{equation*}
\mat{x}_{t+1} = F_t(\mat{x}_t + \pi_t(\mat{x}_t), \boldsymbol\theta_t),
\end{equation*}

where $F_t$ represents a time-varying difference equation,
$\boldsymbol\theta_t$ are model parameters of $F_t$,
$\pi_t: \mathbb{R}^d \rightarrow \mathbb{R}^d$ is a feedback controller that maps the current state $\mat{x}_t$ to a control action.
The goal of optimal control is to design these feedback controllers $\{ \pi_t \}_{t=0}^{T-1}$ such that some objectives are satisfied.
This can be represented as the following objective function,

\begin{gather*}
\min \limits_{\overline{\pi}} \mathbb{E}_{(\mat{x}_0, y) \sim \mathcal{D}} \left[ J(\mat{x}_0, \mat{y}, \overline{\pi}) \right] 
\vcentcolon = 
\min \limits_{\overline{\pi}} \mathbb{E}_{(\mat{x}_0, y) \sim \mathcal{D}} \left[ \Phi(\mat{x}_T, y) + \sum_{t=0}^{T-1} {\cal L}(\{ \mat{x}_s \}_{s=0}^t, \pi_t, f_t) \right],
\\
{\rm s.t.}
\;\;
\mat{x}_{t+1} = F_t(\mat{x}_t + \pi_t(\mat{x}_t), \boldsymbol\theta_t),
\end{gather*}
where $\Phi(\cdot)$ and ${\cal L}(\cdot)$ represent terminal and running losses, respectively.
In general, objectives in the real world can be structured as terminal and running loss functions. 
Therefore, optimizing such an objective function aligns with achieving a real-world goal. 
Take the development of autonomous vehicles as an example. 
This involves guiding the vehicle to a destination along a specific route. 
This challenge can be approached as an optimal control problem, where reaching the destination is expressed as a terminal loss function, and the deviation from the planned route is captured through a running loss function.

The essential task of supervised learning is to approximate some function 
\begin{equation*}
F: \mathcal{X} \rightarrow \mathcal{Y},
\;\;
{\rm where}
\;\;
F = F_{T-1} \circ F_{T-2} \circ \cdots \circ F_1 \circ F_0,
\end{equation*}
which maps inputs in $\mathcal{X} \in \mathbb{R}^d$ (e.g. images, natural language sequences) to labels in $\mathcal{Y}$ (categories, numerical predictions).
The objective of developing robust deep neural networks can be formulated within an optimal control framework. 
Here, the aim is to minimize the discrepancy between the model's predictions and the actual labels through a terminal loss function, 
$\Phi(\mat{x}_T, y)$, by implementing feedback controllers. 
However, this ideal scenario is challenged by the practical limitation that true labels are unavailable during the model's inference phase. 
As a consequence, our focus shifts towards developing robust model predictions through the development of running loss functions. 
In this work, we introduce a running loss function that assesses the state of control at each timestep $t$,

\begin{align*}
{\cal L}(\{ \mat{x}_s \}_{s=0}^t, \pi_t, (f_t^P, f_t^I, f_t^D)) 
& \vcentcolon = 
\frac{1} {2} \lVert 
f_t^P(\mat{x}_t + \pi_t(\mat{x}_t)) \lVert_2^2
+ \frac{1} {2} \lVert f_t^I(\mat{x}_t + \pi_t(\mat{x}_t) + \sum_{s=0}^{t-1} \mat{x}_s) \lVert_2^2
\nonumber
\\
& \;\;\;\; + \frac{1} {2} \lVert f_t^D(\mat{x}_t + \pi_t(\mat{x}_t) - \mat{x}_{t-1}) \rVert_2^2 + \frac{c_t} {2} \lVert \pi_t(\mat{x}_t) \rVert_2^2,
\end{align*}
This running loss function calculates a loss by measuring the difference between the controlled state and certain embedding manifolds. 
These manifolds capture the structural, integrative, and derivative aspects of state embeddings. 
Ideally, the states from unperturbed input samples should align with these embedding manifolds. 
Thus, when perturbations are introduced to the input, this running loss function assesses the quality of the state. 
Minimizing this running loss helps in adjusting the states to correct the effects of such perturbations. 
By defining both terminal and running loss functions,
solving this optimal control problem is equivalent to generating feedback controllers $\{ \pi_t \}_{t=0}^{T-1}$,
such that the controlled state trajectory of perturbed input performs similarly to the unperturbed counterpart.

\end{document}